\documentclass[11pt]{article}
\usepackage{color}
\usepackage{pdfsync,enumitem,dsfont}

\usepackage[T1]{fontenc}
\usepackage[latin9]{inputenc}
\usepackage{lmodern}
\usepackage{geometry}
\geometry{verbose,tmargin=1in,bmargin=1in,lmargin=1in,rmargin=1in}
\usepackage{amsthm}
\usepackage{amsmath}
\usepackage{amssymb}
\usepackage{amsthm}
\usepackage{esint}
\usepackage[numbers, sort]{natbib}

\usepackage{subcaption, caption}
\usepackage{graphicx}
\usepackage{authblk}
\newcommand{\E}{\mathbb{E}}
\newcommand{\Prob}{\mathbb{P}}
\newcommand{\hist}{\mathcal{H}_{t-1}}
\newcommand{\regret}{{\rm Regret}}
\newcommand{\sregret}{{\rm SRegret}}
\newcommand{\Ac}{\mathcal{A}}
\newcommand{\Yc}{\mathcal{Y}}
\newcommand{\Xc}{\mathcal{X}}
\newcommand{\Rc}{\mathcal{R}}
\newcommand{\Hc}{\mathcal{H}}

\newcommand{\N}{\mathbb{N}_0}
\DeclareMathOperator*{\argmin}{argmin}
\DeclareMathOperator*{\argmax}{argmax}

\theoremstyle{plain}

\newtheorem{thm}{Theorem}
\newtheorem{lemma}[thm]{Lemma}

\newtheorem{example}{Example}
\newtheorem{fact}{Fact}

\newtheoremstyle{TheoremNum}
        {\topsep}{\topsep}              
        {\itshape}                      
        {}                              
        {\bfseries}                     
        {.}                             
        { }                             
        {\thmname{#1}\thmnote{ \bfseries #3}}
\theoremstyle{TheoremNum}

\date{}

\long\def\comment#1{}
\begin{document}

\title{Satisficing in Time-Sensitive Bandit Learning}
\author[1]{Daniel Russo}
\author[2]{Benjamin Van Roy}
\affil[1]{Columbia University, djr2174@columbia.edu}
\affil[2]{Stanford University, bvr@stanford.edu}

\maketitle
\begin{abstract}
		Much of the recent literature on bandit learning focuses on algorithms that aim to converge on an optimal action.  One shortcoming is that this orientation does not account for time sensitivity, which can play a crucial role when learning an optimal action requires much more information than near-optimal ones.  Indeed, popular approaches such as upper-confidence-bound methods and Thompson sampling can fare poorly in such situations.  We consider instead learning a {\it satisficing action}, which is near-optimal while requiring less information, and propose {\it satisficing Thompson sampling}, an algorithm that serves this purpose.  We establish a general bound on expected discounted regret and study the application of satisficing Thompson sampling to linear and infinite-armed bandits, demonstrating arbitrarily large benefits over Thompson sampling.  We also discuss the relation between the notion of satisficing and the theory of rate distortion, which offers guidance on the selection of satisficing actions.
\end{abstract}

\section{Introduction}\label{sec:intro}
As noted by Herbert Simon when receiving his Nobel Prize, ``decision makers can satisfice either by finding optimum solutions for a simplified world, or by finding satisfactory solutions for a more realistic world.''  Oriented around the former, much of the bandit learning literature has focused on algorithms that aim to converge on an optimal action.  As this area progresses, researchers study increasingly complex models, often with very large or infinite action sets.  For some such models, convergence to optimality can be a very slow process.  It is natural to ask whether a satisficing action can be identified much more quickly.  If so, that may be preferable, especially considering that the model itself is a simplified approximation to reality and thus even an exact optimal action is not truly optimal.  The following example illustrates the issue. \\
\begin{example}{\bf (Many-Armed Deterministic Bandit)}
	\label{ex:many-armed}
	Consider an action set $\Ac=\{1,\ldots,K\}$.  Each action $a \in \Ac$ results in reward $\theta_a$.  We refer to this as a {\it deterministic bandit} because the realized reward
	is determined by the action and not distorted by noise.  The agent begins with a prior over each $\theta_a$ that is independent and uniform over $[0,1]$
	and sequentially applies actions $A_0,A_1,A_2,\ldots$, selected by an algorithm that adapts decisions as rewards are observed.
	As $K$ grows, it takes longer to identify the optimal action $A^* = \argmax_{a \in \Ac} \theta_a$.  Indeed, for any algorithm, $\Prob(A^* \in \{A_0,\ldots A_{t-1}\}) \leq t/K$.
	Therefore, no algorithm can expect to select $A^*$ within time $t\ll K$.  On the other hand, by simply selecting actions in order, with $A_0 = 1, A_1 = 2, A_2 = 3, \ldots$,
	the agent can expect to identify an $\epsilon$-optimal action within $t = 1/\epsilon$ time periods, independent of $K$. 
\end{example}
\vspace{\baselineskip}

It is disconcerting that popular algorithms perform poorly when specialized to this simple problem.  Thompson sampling (TS) \cite{thompson1933}, 
for example, is likely to sample a new action
in each time period so long as $t \ll K$.  The underlying issue is most pronounced in the asymptotic regime of $K \to \infty$, 
for which TS never repeats any action because, at any point in time, there will be actions better than those previously selected. 
A surprisingly simple modification offers dramatic improvement: settle for the first action $a$ for which $\theta_a \geq 1-\epsilon$.
This alternative can be thought of as a variation of TS that aims to learn a {\it satisficing} action $\tilde{A} = \min\{a: \theta_a \geq 1-\epsilon\}$.  
We will refer to an algorithm that samples from the posterior
distribution of a satisficing action instead of the optimal action, as {\it satisficing Thompson sampling} (STS).

While stylized, the above example captures the essence of a basic dilemma faced in all decision problems and not adequately addressed by 
popular algorithms.  The underlying issue is time preference.  In particular, if an agent is only concerned about performance over an 
asymptotically long time horizon, it is reasonable to aim at learning $A^*$, while this can be a bad idea if shorter term performance matters
and a satisficing action $\tilde{A}$ can be learned more quickly.
To model time preference and formalize benefits of STS, we will assess performance in terms of expected discounted regret,
which for the many-armed deterministic bandit can be written as $\E\left[\sum_{t=0}^\infty \alpha^t (\theta_{A^*} - \theta_{A_t})\right]$.
The constant $\alpha \in [0,1]$ is a discount factor that conveys time preference.
It is easy to show, as is done through Theorem \ref{thm: regret of TS} in the appendix, that in the asymptotic regime of $K \to \infty$, TS experiences expected discounted regret of
$1/2(1-\alpha)$, whereas that of STS is bounded above by $1/\sqrt{1-\alpha}$.  For $\alpha$ close to $1$, we have $1/\sqrt{1-\alpha} \ll 1/(1-\alpha)$, and therefore STS vastly outperforms TS.
In fact, as $\alpha$ approaches $1$, the ratio between expected discounted regret of TS and that of STS goes to infinity.
This stylized example demonstrates potential advantages of STS.  

Of course, satisficing can also play a critical role in more complicated decision problems. The following example provides one simple illustration by adding a structured correlation pattern to the infinite-armed deterministic bandit treated in Example 1. \\
\begin{example}{\bf (Infinite-Armed Deterministic Bandit With Hierarchical Structure)}
	A website seeks the best advertisement to display out of an enormous number of alternatives. Each particular ad $a\in \mathcal{A}$ is associated with a known binary feature vector $\phi(a) \in \{0, 1\}^d$, where individual features may indicate whether an ad pertains to a particular category of product, contains an image of a particular celebrity, includes bright colors, etc.  When displayed, advertisement $a$ generates revenue governed by the linear mixed model
	\[
	R_a=\sum_{j=1}^{d} \phi(a)_j \theta_{0j} + \theta_{1a}  
	\] 
	where $(\theta_{01},\cdots, \theta_{0d})$ encodes the effect of each feature, while $\theta_{1a}$ encodes an ad-specific effect.  Any given feature vector $\phi(a)$ for $a\in \Ac$ is shared by an infinite number of other ads, i.e. $|\{ a'\in \Ac \mid   \phi(a') = \phi(a)\}| =\infty$. This serves to model settings in which there are many ads with common features relative to the problem's time horizon. The agent begins with a prior over $\theta=((\theta_{0j})_{j=1,\cdots d},   (\theta_{1a})_{a\in \Ac} )$ under which each variable is independent with marginal distributions $\theta_{0j} \sim N\left( 0, \sigma_0^2 \right)$ and $\theta_{1a} \sim {\rm Uniform}(0,1)$. 
\end{example}
\vspace{\baselineskip}
Similar to the infinite armed bandit with independent arms, identifying an exactly optimal arm is hopeless, though it may be possible to quickly identify a satisficing arm.  The search for a satisficing arm can be further accelerated because the linear mixed model enables some generalization across arms.  In particular, data gathered from trying some arms inform estimates for other arms, helping to guide the search.

In order to avoid trying each individual arm, one might be tempted to approximate this model via a linear bandit, ignoring the  ad specific effect $\theta_{1a}$ altogether.  In particular, one could assume that rewards are generated according to $\sum_{j=1}^{d} \phi(a)_j \theta_{0j} + W_t$, where $W_t$ is iid zero-mean noise.  But ignoring consistent ad-specific effects in this way can lead popular algorithms like Thompson sampling and linear upper-confidence bound approaches to converge on poorly performing arms.  We will revisit this example in Section \ref{sec: Hierarchical}, where we demonstrate how our approach can efficiently identify satisficing actions while modeling ad-specific effects.

\subsection{Our Contributions.}
This paper develops a general framework for studying satisficing in sequential learning.  Satisficing algorithms aim to learn a satisficing action $\tilde{A}$.  Building on the work of \citet{russo2016info}, we will establish a general information-theoretic regret bound, showing that any algorithm's expected discounted regret relative to a satisficing action is bounded in terms of the mutual information $I(\theta; \tilde{A})$ between the model parameters $\theta$ and the satisficing action and a newly-defined notion of information ratio, which measures the cost of information acquired about the satisficing action.  The mutual information $I(\theta; \tilde{A})$ can be thought of as the number of bits of information about $\theta$ required to identify $\tilde{A}$, and the fact that the bound depends on this quantity instead of the entropy of $A^*$, as does the bound of \citep{russo2016info}, allows it to capture the reduction of discounted regret made possible by settling for the satisficing action.

A natural and deep question concerns the the choice of satisficing action $\tilde{A}$ and the limits of performance attainable via satisficing. An exploration of this question yields novel connections between sequential learning and rate-distortion theory. In Section, \ref{sec: rate distortion} we define a natural rate-distortion function for Bayesian decision making, which captures the minimal information about $\theta$ a decision-maker must acquire in order to reach an $\epsilon$-optimal decision.  Combining this rate-distortion function with our general regret bound leads to new results and insights.  As an example, while previous information-theoretic regret bounds for the linear bandit problem become vacuous in contexts with infinite action spaces, our rate-distortion function leads to a strong bound on expected discounted regret.

We will also study the infinite-armed bandit problem with noisy rewards.  Here, we will consider a satisficing action $\tilde{A} = \min\{a: \theta_a \geq 1-\epsilon\}$. Simple numerical experiments demonstrate the benefits of STS over TS. We instantiate our general regret analysis in the infinite-armed bandit problem by bounding the mutual information and information ratio in that problem. This yields a bound on expected discounted regret that formalized the benefits of STS over TS. We complement this upper bound by establishing a matching lower bound on the regret of any algorithm for infinite armed bandit. 

\subsection{Alternative approaches.}

Many papers \citep{kleinberg2008multi, rusmevichientong2010linearly, bubeck2011xarmed}  have studied bandit problems with continuous action spaces, where it is also necessary to learn only approximately optimal actions. However, because these papers focus on the asymptotic growth rate of regret they implicitly emphasize later stages of learning, where the algorithm has already identified extremely high performing actions but exploration is needed to identify even better actions. Our discounted framework instead focuses on the initial cost of learning to attain good, but not perfect, performance.  Recent papers \citep{francetich2016toolkita,francetich2016toolkitb} study several heuristics for a discounted objective, though without an orientation toward formal regret analysis.  The knowledge gradient algorithm of \citet{ryzhov2012knowledge}  also takes time horizon into account and can learn suboptimal actions when its not worthwhile to identify the optimal action. This algorithm tries to directly approximate the optimal Bayesian policy using a one-step lookahead heuristic, but there are no performance guarantees for this method. \citet{deshpande2012linear} consider a linear bandit problem with dimension that is too large relative to the desired horizon. They propose an algorithm specifically for that problem that limits exploration and learns something useful within this short time frame.

\citet{berry1997bandit, wang2009algorithms} and \citet{bonald2013two} study an infinite-armed bandit problem in which it is impossible to identify an optimal action and propose algorithms to minimizes the asymptotic growth rate of regret. 
Their strategies are carefully designed, but appear to be difficult to adapt to more complex problems. For example, one algorithm in \citet{berry1997bandit} discards an arm as soon as it produces a reward of zero in some period, which is sensible only for infinite armed bandits with uniform prior and binary rewards. A brief section describes extensions to non-uniform priors, but still these cannot extend beyond binary feedback. \citet{bonald2013two} design a procedure specifically for the infinite-armed bandit with uniform prior and binary rewards under which the first-order term in an asymptotic expansion of regret is minimized.  The algorithm of \citet{wang2009algorithms} discards all but $k$ arms at the start of the problem then applies a standard algorithm for $k$--armed bandits. The main contribution of that work is to carefully analyze regret as a function of the prior, allowing them to choose $k$ to minimize regret upper bounds. While we will instantiate our general regret bound for STS on the infinite-armed bandit problem, we view this example as a very simple and stylized special case that we provide to illustrate basic concepts.  The flexibility of STS and our analysis framework allow this work to be applied to more complicated time-sensitive learning problems. 

Our approach is built on Thompson sampling. One simple reason is that Thompson sampling is enjoying wide practical use due to its ease of use, ability to incorporate complex prior information, and resilience to delayed feedback \cite{russo2018tutorial, scott2010modern, chapelle2011empirical}. Given this, we see value in broadening the class of problem to which Thompson sampling approaches may be applied. It is also worth emphasizing that formulations of problems like the infinite armed-bandit in Example 1 are inherently Bayesian. Arms are modeled as independent and yet the decision-maker is required to make inferences about the quality of arms for which no data is available. By contrast, frequentist algorithms like KL-UCB \cite{KL-UCB2013} usually require an initial phase in which all arms are tested at least once. Beyond simple infinite-armed bandits, developing satisficing variants of Thompson sampling appears to be a natural way to approach more complex problems like the hierarchical bandit in Example 2 -- which require both satisficing and generalizing across arms. A separate motivation for us is to improve the information theoretic analysis of complex Bayesian bandit problems. This analysis is centered on Thompson sampling, but beyond any interest in the algorithm it has been an effective tool for establishing regret upper bounds, including for bandit convex optimization \cite{bubeck2015multi}, for partial monitoring \cite{lattimore19Info}, for bandits with graph-structured feedback \cite{tossou2017thompson}, and for reinforcement learning \cite{lu2019information}.

\section{Problem Formulation.}\label{sec: formulation}
An agent sequentially chooses actions $(A_t)_{t\in \N}$ from the action set $\Ac$ and observes the corresponding outcomes $\left(Y_{t}\right)_{t\in \N} \subset \mathcal{Y}$. The agent associates a reward $R(y)$ with each outcome $y\in \Yc$. Let $R_{t} \equiv R(Y_{t})$ denote the reward corresponding to outcome $Y_{t}$.
The outcome $Y_t$ in period $t$ depends on the chosen action $A_t$, idiosyncratic randomness associated with that time step, and a random variable $\theta$ that is fixed over time. Formally, there is a known system function $g$, and an iid sequence of disturbances $(W_t)_{t\in \N}$ such that
\[ 
Y_t = g(A_t, \theta, W_t).
\] 
The disturbances $W_t$ are independent of $\theta$, and have a known distribution. This is without loss of generality, as uncertainty about $g$ and the distribution of $W_t$ could be included in the definition of $\theta$. From this, we can define
\[ 
\mu(a, \theta)= \E[R\left(g(a, \theta, W_t)\right) | \theta]
\] 
to be the expected reward of an action $a$ under parameter $\theta$. Ours can be thought of as a Bayesian formulation, in which the distribution of $\theta$ represents the agent's prior uncertainty about the true characteristics of the system, and conditioned on $\theta$, the remaining randomness in $Y_t$ represents idiosyncratic noise in observed outcomes. 

The history available when selecting action $A_t$ is $\hist = (A_0, Y_{0}, \ldots, A_{t-1}, Y_{t-1})$. The agent selects actions according to a policy, which is a sequence of functions $\psi=(\psi_t)_{t \in \N}$, each mapping a history and an exogenous random variable $\xi$ to an action, with $A_t= \psi_{t}(\hist, \xi)$ for each $t$. Throughout the paper, we use $\xi$ to denote some random variable that is independent of $\theta$ and the disturbances $(W_{t})_{t\in \N}$. 




Let $R^*  = \sup_{a\in \Ac} \mu(a, \theta)$ denote the supremal reward, and let  $A^* \in \,\, \argmax_{a \in \Ac} \, \mu(a,\theta)$ denote the true optimal action, when this maximum exists.  As a performance metric, we consider \emph{expected discounted regret} of a policy $\psi$ is defined by
\[
\regret (\alpha, \psi) =\E^{\psi}\left[\sum_{t=0}^{\infty} \alpha^{t} (R^* - R_{t})\right],
\]
which measures a discounted sum of the expected performance gap between an omniscient policy which always chooses the optimal action $A^*$ and the policy $\psi$ which selects the actions $(A_t)_{t\in \N}$. This deviates from the typical notion of expected regret in its dependence on a discount factor $\alpha \in [0,1]$.
Regular expected regret corresponds to the case of $\alpha = 1$.  Smaller values of $\alpha$ convey time preference by weighting gaps in nearer-term performance higher than gaps in longer-term performance.

The definition above compares regret relative to the optimal action $A^*$ and corresponding reward $R^*$. It is useful to also consider performance loss relative to a less stringent benchmark. We define the satisficing regret at level $D\geq 0$ to be 
\[
\sregret (\alpha, \psi, D) =\E^{\psi}\left[\sum_{t=0}^{\infty} \alpha^{t} (R^*-D - R_{t})\right].
\]
This measures regret relative to an action that is near-optimal, in the sense that it yields expected reward $R^*-D$, which is within $D$ of optimal. This notation was chosen due to the connection we develop with rate-distortion theory, where $D$ typically denotes a tolerable level of ``distortion'' in a lossy compression scheme. Of course, for all $D\geq 0$, 
\begin{equation}\label{eq: sregret to regret}
\regret(\alpha, \psi) = \sregret (\alpha, \psi, D) + \frac{D}{1-\alpha}
\end{equation}
and so one can easily translate between bounds on regret and bounds on satisficing regret. However, directly studying satisficing regret helps focus our attention on the design of algorithms that purposefully avoid the search for exactly optimal behavior in order to limit exploration costs. 

\subsection*{Additional Notation.}
Before beginning, let us first introduce some additional notation.
We denote the entropy of a random variable $X$ by $H(X)$, the Kullback-Leibler divergence between probability distributions $P$ and $Q$ by $D(P||Q)$, and the mutual information between two random variables $X$ and $Y$ by $I(X; Y)$. We will frequently be interested in the conditional mutual information $I(X; Y| \hist)$. 

We sometimes denote by $\E_{t}[\cdot]=\E[\cdot | \hist ]$ the expectation operator conditioned on the history up to time $t$ and similarly define $\Prob_{t}(\cdot) = \Prob(\cdot | \hist)$. The definitions of entropy and mutual information depend on a base measure. We use $H_{t}(\cdot)$ and $I_{t}(\cdot\, ,\cdot)$ to denote entropy and mutual-information when the base-measure is the posterior distribution $\Prob_{t}$. For example, if $X$ is and $Z$ are discrete random variables taking values in a sets $\mathcal{X}$ and $\mathcal{Z}$,
\[
I_{t}(X; Z) =  \sum_{x\in \mathcal{X}}\sum_{z\in \mathcal{Z}} \Prob_{t}(X=x, Z=z) \log\left(\frac{\Prob_{t}(X=x, Z=z)}{\Prob_{t}(X=x)\Prob_{t}(Z=z)}\right).
\]
Due to its dependence on the realized history $\hist$, $I_{t}(X;Z)$ is a random variable. The standard definition of conditional mutual information integrates over this randomness, and in particular, $\E[I_{t}(X;Z)] = I(X; Z | \hist)$.

\section{Satisficing Actions.}\label{sec: satisficing}

We will consider learning a satisficing action $\tilde{A}$ instead of an optimal action $A^*$.  The idea is to target a satisficing action that is near-optimal yet easy to learn.  The information about $\theta$ required to learn an action $\tilde{A}$ is captured by $I(\theta; \tilde{A})$, while the performance loss is $\E[R^* - \tilde{R}]$, where $\tilde{R} = \mu(\tilde{A}, \theta)$.  For $\tilde{A}$ to be easy to learn relative to $A^*$, we want $I(\theta; \tilde{A}) \ll I(\theta; A^*)$.  We will motivate this abstract notion through several examples.

Our first example addresses the infinite-armed deterministic bandit, as discussed in Section \ref{sec:intro}..
\begin{example}[first satisfactory arm]
	Consider the infinite-armed deterministic bandit of Section \ref{sec:intro}.  For this problem, the prior of $A^*$ is uniformly distributed across a large number $K$ of actions, and $I(\theta; A^*) = H(A^*) = \log K$.  Consider a satisficing action $\tilde{A} = \min\{ k | \theta_{k} \geq R^*-\epsilon\}$, 
	which represents the first action that attains reward within $\epsilon$ of the optimum $R^*=\max_{k} \theta_k$. As $K \to \infty$, $I(\theta; A^*) \to \infty$. But $I(\theta; \tilde{A})$ remains finite, as in this limit $\tilde{A}$ converges weakly to a geometric random variable, with $I(\theta; \tilde{A}) = H(\tilde{A}) = -((1-\epsilon) \log(1-\epsilon) + \epsilon \log(\epsilon))/\epsilon$.  
\end{example}

The next example addresses the infinite-armed bandit with hierarchical structure from Section \ref{sec:intro}. Naturally, the satisficing action is itself defined in a hierarchical way, first defining a subset of arms with the most attractive features and then selecting the first arm among those with a satisfactory arm-specific effect. 
\begin{example}[hierarchical satisficing]\label{ex: hiearchical satisficing action}
	Consider a hierarchical infinite-armed deterministic bandit of Section \ref{sec:intro}. The set of arms is $\Ac=\{1, 2, 3, \ldots\}$. The reward generated by an arm $a\in \Ac$ can be written as $\mu(a, \theta)  = \langle \phi(a) \, , \, \theta_0 \rangle + \theta_{1a}$, where $\phi(a) \in \{0,1  \}^d$ is a known feature vector associated with the arm, $\theta_0 \in \mathbb{R}^d$ is drawn from some prior and the arm-specific effects $\theta_{1,a} \sim {\rm Unif}[0,1]$ are drawn independently across arms $a\in \Ac$. For each action $a$, we assume there are is infinite collection of other actions $\{a' \mid \phi(a') = \phi(a)\}$ that share the same feature vector. (One should have in mind problems where the features encode relatively coarse categories.) We take $\Ac(\theta_0) =\{a\in \Ac  \mid  \langle \phi(a), \theta_0 \rangle  \geq \langle \phi(a')\,,\, \theta_0 \rangle \quad \forall a' \in \Ac \}$  to be the subset of actions that have optimal features. We take a satisficing action to be the first such action that has a sufficiently large ad-specific effect. 
	\begin{equation}\label{eq: hierarchical satisficing} 
	\tilde{A} = \argmin_{ a \in \mathcal{A}(\theta_0)}   \{a|  \theta'_a \geq  1-  \epsilon \}. 
	\end{equation}
	In this case, although there are an infinite number of arms, the entropy of the satisficing action is bounded as: 
	\begin{align*}
	I(\tilde{A}; \theta) = I(\tilde{A}; \theta_0) + I(\tilde{A}; \theta_1  \mid \theta_0)  \leq d\log(2)-((1-\epsilon) \log(1-\epsilon) + \epsilon \log(\epsilon))/\epsilon.
	\end{align*}	
	where we have used that $\theta_0$ is supported on $2^d$ possible values and that, conditioned on $\theta_0$, $\tilde{A}$ follows a geometric distribution with survival probability $1-\epsilon$. 
\end{example}

The next example involves reducing the granularity of a discretized action space.
\begin{example}[discretization] 
	Consider a linear bandit with $\Ac \subset \mathbb{R}^p$ and $\mu(a, \theta) = a^\top \theta$  for an unknown vector $\theta$. Suppose that $\theta \sim N(0,I)$ and $\Ac$ consists of $K$ vectors spread out uniformly along boundary of the $d$-dimensional unit sphere $\{a \in \mathbb{R}^p : \| a\|_2 =1\}$. The optimal action $A^* = \argmax_{a\in \Ac} a^\top \theta$  is then uniformly distributed over $\Ac$, and therefore $I(\theta; A^*) = H(A^*) = \log K$.  As $K \to \infty$, it takes an enormous amount of information to exactly identify $A^*$. The results of \cite{russo2014learning} become vacuous in this limit.  Consider a satisficing action $\tilde{A}$ that represents a coarser version of $A^*$. In particular, for $M \ll K$, let $\tilde{\Ac}$ consist of $M$ vectors spread out uniformly along boundary of the $d$-dimensional unit sphere, with $M$ chosen such that for each element of $\Ac$ there is a close approximation in $\tilde{\Ac}$.
	Let $\tilde{A} = \argmax_{a\in \tilde{\Ac}} \theta^\top a$.  This can be viewed as a form of lossy-compression, for which $H(\tilde{A}) \ll H(A^*)$ while $\E[R^* - \tilde{R}]$ remains small.  
\end{example}
In the previous examples, $\theta$ determined $\tilde{A}$, and therefore $I(\theta; \tilde{A}) = H(\tilde{A})$.  We now consider an example in which $I(\theta; \tilde{A})$ is controlled by randomly perturbing the satisficing action.  Here, $I(\theta; \tilde{A})$ can be small even though $H(\tilde{A})$ is large.
\begin{example}[random perturbation]
	Consider again the linear bandit from the previous example. An alternative satistficing action $\tilde{A}$ results from optimizing a perturbed objective $\tilde{A} \in \argmax_{a \in \Ac} a^T (\theta + Z)$ where $Z\sim N(0, (\epsilon/p)^2 I)$. Since $Z$ is not observable, it is not possible in this case to literally learn $\tilde{A}$.  Instead, we consider learning to behave in a manner indistinguishable from $\tilde{A}$.  The variance of $Z$ is chosen such that $\E[R^* - \tilde{R}] = \epsilon$. Moreover, it can be shown that $I(\tilde{A}; \theta)\leq I(\theta+Z; \theta)  = p\log(1+p^2/\epsilon^2)$ and therefore, the information required about $\theta$ is bounded independently of the number of actions.   
\end{example}

	\section{A General Regret Bound.}
This section provides a general discounted regret bound and a new information-theoretic analysis technique. The first subsection introduces an alternative to the information ratio of \citet{russo2016info}, which is more appropriate for time-sensitive online learning problems. The following subsection establishes a general discounted regret bound in terms of this information ratio. 


\subsection{A New Information Ratio.}

First, we make a simplification to the information ratio $(\E_{t}[R^* - R_{t}])^2 / I_{t}(A^* ; (A_t, Y_{t}))$ defined by \citet{russo2016info}. That expression depends on the history $\hist$ and hence is a random variable. In this paper, we observe that this can be avoided, and instead take as a starting point a simplified form of the information ratio that integrates out all randomness. In particular, we study 
\begin{equation}\label{eq: old-info-ratio in expectation}
\frac{(\E[R^* - R_{t}])^2}{I( A^* ; (A_t, Y_{t}) \mid \hist)}.
\end{equation}
Uniform bounds on the information ratio of the type established in past work \cite{bubeck2015multi,russo2016info, liu2017information} imply those on \eqref{eq: old-info-ratio in expectation}. Precisely, if $(\E_{t}[R^* - R_{t}])^2 / I_{t}(A^* ; (A_t, Y_{t}))$ is bounded by $\lambda \in \mathbb{R}$ almost surely (i.e for any history $\hist$), then \eqref{eq: old-info-ratio in expectation} is bounded by $\lambda$ since
\[ 
(\E[R^* - R_{t}])^2 \leq \E[(\E_{t}[R^* - R_{t}])^2]  \leq   \E[\lambda I_{t}(A^* ; (A_t, Y_{t}))] = \lambda I(A^*; (A_t, Y_{t}) | \hist).
\]
A more important change comes from measuring information about a benchmark action $\tilde{A}$, which could be defined as in the  examples in the previous section, rather than with respect to the optimal action $A^*$. For a benchmark action $\tilde{A}$ we consider the single period information ratio
\[
\frac{(\E[\tilde{R} - R_{t}])^2}{I( \tilde{A} ; (A_t, Y_{t}) \mid \hist)}
\]
where $\tilde{R} = \mu(\tilde{A}, \theta)$. This ratio relates the current shortfall in performance relative to the benchmark action $\tilde{A}$ to the amount of information acquired about the benchmark action. We study the discounted average of these single period information ratios, defined for any policy $\psi$ as
\begin{equation}\label{eq: information-ratio}
\Gamma\left(\tilde{A}, \psi \right) = (1-\alpha^2) \sum_{t=0}^\infty \alpha^{2t} \left(\frac{(\E[\tilde{R} - R_{t}])^2}{I(\tilde{A};  (A_t, Y_{t}) | \hist )}\right),
\end{equation}
where the actions $(A_t)_{t\in \N}$ are chosen under $\psi$.   The square in the discount factor $\alpha$ is consistent with the problem's original discount rate, since $(\E[\alpha^t(\tilde{R} - R_{t})])^2 = \alpha^{2t} (\E[\tilde{R} - R_{t}])^2$.

\subsection{General Regret Bound.}
The following theorem bounds the expected discounted regret of any algorithm, or policy, $\psi$ in terms of the information ratio \eqref{eq: information-ratio}.
\begin{thm}
	\label{th:discounted-regret}
	For any policy $\psi$, any $D\geq 0$ and any $\tilde{A}=f(\theta, \xi)$ where $\xi$ is independent of the disturbances $(W_t)_{t\in \N}$, if $\E[\mu(\tilde{A}, \theta)] \geq R^* -D$, then 
	\begin{equation*}
	\sregret(\alpha, \psi, D) \leq  \sqrt{\frac{\Gamma\left(\tilde{A}, \psi \right) I(\tilde{A}; \theta)}{1-\alpha^2}}.
	\end{equation*}
\end{thm}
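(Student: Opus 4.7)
The plan is to follow the information-theoretic recipe of \citet{russo2016info}, modified in two places: replace the optimal benchmark $A^*$ by the satisficing benchmark $\tilde A$ throughout, and arrange the Cauchy--Schwarz step so that the weighting $\alpha^{2t}$ appearing in $\Gamma(\tilde A, \psi)$ falls out naturally. Concretely, I would (i) pass from $R^*-D$ to $\tilde R$ as the comparator, (ii) use Cauchy--Schwarz to split the weighted per-round shortfall into the discount-weighted information ratio times the cumulative information gain, and (iii) bound the cumulative information gain by $I(\tilde A; \theta)$ via the chain rule and data-processing inequality.

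For step (i), since $\E[\tilde R] = \E[\mu(\tilde A, \theta)] \ge R^* - D$,
\[
\sregret(\alpha, \psi, D) = \sum_{t=0}^\infty \alpha^t \bigl(R^* - D - \E[R_t]\bigr) \le \sum_{t=0}^\infty \alpha^t \E[\tilde R - R_t].
\]
If the right-hand side is nonpositive, the theorem is trivial, so I assume it is positive and further bound it above by $\sum_t \alpha^t |\E[\tilde R - R_t]|$. For step (ii), setting $I_t := I(\tilde A; (A_t, Y_t) \mid \hist)$ and applying Cauchy--Schwarz with $a_t = \alpha^t |\E[\tilde R - R_t]|/\sqrt{I_t}$ and $b_t = \sqrt{I_t}$ yields
\[
\sum_t \alpha^t |\E[\tilde R - R_t]| \le \sqrt{\sum_t \alpha^{2t} \frac{(\E[\tilde R - R_t])^2}{I_t}} \cdot \sqrt{\sum_t I_t} = \sqrt{\frac{\Gamma(\tilde A, \psi)}{1-\alpha^2}} \cdot \sqrt{\sum_t I_t}.
\]

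For step (iii), the chain rule for mutual information gives
\[
\sum_{t=0}^{T-1} I(\tilde A; (A_t, Y_t) \mid \mathcal{H}_{t-1}) = I(\tilde A; \mathcal{H}_{T-1})
\]
for each finite $T$. I would then argue that $\tilde A \to \theta \to \mathcal{H}_{T-1}$ is a Markov chain: conditionally on $\theta$, the history is a deterministic function of the noise $(W_s)_{s<T}$ and the policy's internal randomization, while $\tilde A = f(\theta, \xi)$ depends only on the exogenous $\xi$ (taken independent of both). The data-processing inequality then yields $I(\tilde A; \mathcal{H}_{T-1}) \le I(\tilde A; \theta)$, and letting $T \to \infty$ via monotone convergence delivers $\sum_t I_t \le I(\tilde A; \theta)$. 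Substitution into the display above gives the theorem.

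The main technical care is in the Markov-chain verification of step (iii): one has to model the $\xi$ that defines $\tilde A$ as genuinely exogenous, independent of both the disturbances $(W_t)$ and any randomization used by the policy (which is also denoted $\xi$ earlier in the paper), so that the data-processing inequality applies. A minor subtlety in step (ii) is that $\E[\tilde R - R_t]$ may be negative, but this is absorbed by the absolute-value reduction, since the information ratio depends only on the squared shortfall. Cases where $I_t = 0$ while $\E[\tilde R - R_t] \neq 0$ simply make $\Gamma = \infty$ and the bound vacuous, so they need no separate treatment.
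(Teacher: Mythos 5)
Your proposal is correct and follows essentially the same argument as the paper: bound satisficing regret by the shortfall relative to $\tilde R$, apply Cauchy--Schwarz against the discounted information ratio, and control the cumulative information gain by $I(\tilde A;\theta)$ via the chain rule and the conditional independence of $\tilde A$ and the history given $\theta$. Your handling of possibly negative shortfalls via absolute values is in fact slightly more careful than the paper's, which writes that step as an equality.
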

\begin{proof}
	We first show that the mutual information between $\tilde{A}$ and $\theta$ bounds the expected accumulation of mutual-information between $\tilde{A}$ and observations $(A_t, Y_{t})_{t\in \N}$. By the chain rule for mutual information, for any $T$,
\begin{eqnarray*}
	\sum_{t=0}^{T} I(\tilde{A}; (A_t, Y_{t}) \mid \hist)
	&=& \sum_{t=0}^{T} I(\tilde{A}; (A_t, Y_{t}) \mid A_0, Y_{0}, \ldots, A_{t-1}, Y_{t-1}) \\
	&=& I(\tilde{A}; \mathcal{H}_T)\\
	&\leq& I(\tilde{A}; (\theta,\mathcal{H}_{T})) \\
	&=& I(\tilde{A};\theta) + I(\tilde{A} ; \mathcal{F}_{T} | \theta) \\
	&=& I(\tilde{A};\theta)
\end{eqnarray*}
where the final inequality uses that, conditioned on $\theta$, $\tilde{A}$ is independent of $\mathcal{H}_T$. Taking the limit as $T \rightarrow \infty$ implies
\[
\sum_{t=0}^{\infty} I(\tilde{A}; (A_t, Y_{t}) \mid \hist) \leq I(\tilde{A};\theta),
\]
where, the infinite series is assured to converge by the non-negativity of mutual information. Now, let
\[
\Gamma_{t} \equiv \frac{(\E[\tilde{R} - R_{t}])^2}{I(\tilde{A}; (A_t,Y_{t}) | \hist)}
\]
denote the information ratio at time $t$ under the benchmark action $\tilde{A}$ and actions $\{A_t : t \in \N\}$ chosen according to $\psi$. Then 
\begin{eqnarray*}
	\sregret(\alpha, \psi, D) = \E\left[\sum_{t=0}^\infty \alpha^t (R^*-D - R_{t})\right] 
	&=& \sum_{t=0}^\infty \alpha^t \E\left[\tilde{R} - R_{t, A_t}\right] \\
	&=& \sum_{t=0}^\infty \sqrt{\alpha^{2t} \Gamma_{t}} \sqrt{I(\tilde{A}; (A_t,Y_{t}) | \hist)}\\
	&\leq&  \sqrt{\sum_{t=0}^\infty \alpha^{2t} \Gamma_{t}} \sqrt{\sum_{t=0}^\infty I(\tilde{A}; (A_t,Y_{t}) | \hist)} \\
	&\leq&  \sqrt{\left[\sum_{t=0}^\infty \alpha^{2t} \Gamma_{t}\right]}\sqrt{I(\tilde{A} ; \theta )} \\
	&=& \sqrt{\frac{\Gamma\left(\tilde{A}, \psi \right) I(\tilde{A}; \theta)}{1-\alpha^2}},
\end{eqnarray*}
where the first inequality follows from the Cauchy-Schwarz inequality and the second was established earlier in this proof.
\end{proof} 

An immediate consequence of this bound on satisficing regret is the discounted regret bound
\begin{equation}\label{eq: discounted-regret}
\regret(\alpha, \psi) \leq \frac{\E[R^*- \tilde{R}]  }{1-\alpha}+\sqrt{\frac{\Gamma\left(\tilde{A}, \psi \right) I(\tilde{A}; \theta)}{1-\alpha^2}}.
\end{equation}
This bound decomposes regret into the sum of two terms; one which captures the discounted  performance shortfall of the benchmark action $\tilde{A}$ relative to $A^*$, and one which bounds the additional regret incurred while learning to identify $\tilde{A}$. Breaking things down further, the mutual information $I(\theta; \tilde{A})$ measures how much information the decision-maker must acquire in order to implement the action $\tilde{A}$, and the information ratio measures the regret incurred in gathering this information. It is worth highlighting that for any given action process, this bound holds simultaneously for all possible choices of $\tilde{A}$, and in particular, it holds for the $\tilde{A}$ minimizing the right hand side of \eqref{eq: discounted-regret}.

\section{Connections With Rate Distortion Theory.}\label{sec: rate distortion}
This section considers the optimal choice of satisfactory action $\tilde{A}$ and develops connections with the theory of rate-distortion in information theory. We construct a natural rate-distortion function for Bayesian decision making in the next subsection. Subsection \ref{subsec: uniformly bounded info ratios} then develops a general regret bound that depends on this rate-distortion function.

\subsection{A Rate Distortion Function for Bayesian Decision Making.}\label{subsec: a rate distortion function}
In information-theory, the entropy of a source characterizes the length of an optimal lossless encoding. The celebrated rate-distortion theory  \citep[Chapter~10]{cover2012elements} characterizes the number of bits required for an encoding to be close in some loss metric. This theory resolves when it is possible to to derive a satisfactory lossy compression scheme while transmitting far less information than required for a lossless compression. The rate-distortion function for a random variable $X$ with domain $\mathcal{X}$ with respect to a loss function $\ell: \hat{\mathcal{X}} \times \mathcal{X} \to \mathbb{R}$ is 
\begin{eqnarray} \label{eq: general rate distortion definition}
\mathcal{R}(D) =& \min & I(\hat{X}; X) \\ \nonumber
&s.t.& \E[\ell(\hat{X},X)] \leq D
\end{eqnarray}
where the minimum is taken the choice of random variables $\hat{X}$ with domain $\hat{\Xc}$, and $I(X; \hat{X})$ denotes the mutual information between $X$ and $\hat{X}$. One can view this optimization problem as specifying a conditional distribution $P(\hat{X} \in \cdot | X)$ that minimizes the information $\hat{X}$ uses about $X$ among all choices incurring average loss less than $D$. 

We will explore a powerful link with sequential Bayesian decision-making, where the rate-distortion function characterizes the minimal amount of new information the decision--maker must gather in order make a satisfactory decision.  Typically \eqref{eq: general rate distortion definition} is applied in the context of representing $X$ as closely as possible by $\hat{X}$, and the loss function is taken to be something like the squared distance or total variation distance between the two. For our purposes, we replace $X$ with $\theta$, and $\hat{X}$ with a benchmark action $\tilde{A}$. The interpretation is that $\tilde{A}=f(\theta; \xi)$ is a function of the unknown parameter $\theta$ and some exogenous randomness $\xi$ that offers a similar reward to playing $A^*$ but hopefully can be identified using much less information about $\theta$.  We specify a loss function $\ell : \mathcal{A} \times \Theta \to \mathbb{R}$ measuring the single period regret from playing $a$ under $\theta$:
\[ 
\ell(a, \theta )= \max_{a' \in \mathcal{A}} \mu(a', \theta) - \mu(a, \theta). 
\]
As a result 
\[ 
\E[\ell(\tilde{A}, \theta )]= \E[R^*-\mu(\tilde{A}, \theta)].
\]
We come to the rate-distortion function
\begin{eqnarray}\label{eq: rate distortion function for decision making}
\Rc(D) :=  & \min & I(\tilde{A}; \theta) \\\nonumber
&s.t.& \E[R^*-\mu(\tilde{A}, \theta)] \leq D.
\end{eqnarray}
As before, the minimum above is taken over the choice of random variable $\tilde{A}$ taking values in $\mathcal{A}$. That is, the minimum is taken over all conditional probability distributions $\Prob(\tilde{A}\in \cdot | \theta)$ specifying a distribution over actions as a function of $\theta$. Since the choice $\tilde{A}=A^*$ is always feasible, for all $D>0$
\[ 
\Rc(D)\leq I(A^*; \theta)= H(A^*)
\]
where $H(A^*)$ denotes the entropy of the optimal action. Rate distortion is never larger than entropy, but it may be small even if the entropy of $A^*$ is infinite.  

The following, somewhat artificial, example explicitly links communication with decision-making and may help clarify the role of the rate-distortion function $\mathcal{R}(D)$.
\begin{example}
	A military command center waits to hear from an outpost before issuing orders. The outpost, stationed close to the conflict, determines its message based on a wealth of nuanced information -- at the level of readouts from weather sensors and full transcripts of intercepted enemy communication. The command post could make very complicated decisions as a function of the detailed information it receives, with the possibility of specifying commands at the level of individual troops and equipment. How much must decision quality degrade if decisions are based only on coarser information? At an intuitive level, the outpost only needs to communicate surprising revelations that are important to reaching a satisfactory decision. As a result, our answer can depend in a complicated way on the extent to which the outpost's observations are predictable apriori and the extent to which decision quality is reliant on this information. The rate-distortion function precisely quantifies these effects.
	
	To map this problem onto our formulation of the rate-distortion function, take $\theta$ to consist of all information observed by the outpost, $\tilde{A}$ to be the order issued by the command center, and the rewards to indicate whether the orders led to a successful outcome. The mutual information $I(\tilde{A}; \theta)$ captures the average amount of information the outpost must send in order for $\tilde{A}$ to be implemented. The goal is to develop a plan for placing orders that requires minimal communication from the outpost among all plans that degrade the chance of success by no more than $D$.
\end{example}

\subsection{Uniformly Bounded Information Ratios.}\label{subsec: uniformly bounded info ratios}
The general regret bound in Theorem \ref{th:discounted-regret} has a superficial relationship to the rate-distortion function $\mathcal{R}(D)$ through its dependence on the mutual information $I(\tilde{A}; \theta)$ between the benchmark action and the true parameter $\theta$. Indeed, for a benchmark action $\tilde{A}$ attaining the rate-distortion limit, $I(\tilde{A}; \theta)=\mathcal{R}(D)$ and we attain a regret bound that depends explicitly on the rate-distortion level. However, the information ratio $\Gamma\left(\tilde{A}, \psi \right)$ also depends on the choice of benchmark action, and may be infinite for a poor choice.

This second dependence on $\tilde{A}$ does not appear in rate-distortion theory, and reflects a fundamental distinction between communication problems and sequential learning problems. Indeed, a key feature enabling the sharp results of rate distortion theory is that no bit of information is more costly to send and receive than others; the question is to what extent useful communication is possible while many fewer bits of information on average. By contrast, sequential learning agents must explore to uncover information and the cost per unit of information uncovered may vary widely depending on which pieces of information are sought. This is accounted for by the information ratio $\Gamma\left(\tilde{A}, \psi \right)$, which roughly captures the expected cost, in terms of regret incurred, per bit of information acquired about the benchmark action.  

Despite this, regret bounds in terms of rate-distortion apply in many important cases. Theorem \ref{th:discounted-regret}, which is an immediate consequence of Theorem \ref{th:discounted-regret}, provides a general bound of this form. Roughly, the uniform information ratio  $\Gamma_{U}$ in the theorem reflects something about quality of the feedback the agent receives when exploring;  it means that for \emph{any} choice of benchmark action $\tilde{A}$ there is a sequential learning strategy that learns about $\tilde{A}$ with cost per bit of information less than $\Gamma_{U}$. The next section applies this result to online linear optimization,  where several possible uniform information ratio bounds are possible depending on the problems precise feedback structure. 

\begin{thm}
	Suppose there is a uniform bound on the information ratio
	\[
	\Gamma_{U}:=\sup_{\tilde{A}} \inf_{\psi} \Gamma\left(\tilde{A}, \psi \right)  <\infty. 
	\]	
	Then, for any $D\geq 0$ there exists a policy $\psi$ under which  
	\[
	\sregret(\alpha, \psi, D) \leq \sqrt{\frac{\Gamma_U \Rc(D)}{1-\alpha^2}}. 
	\]
\end{thm}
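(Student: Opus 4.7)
The plan is to combine the definition of the rate-distortion function $\Rc(D)$ with the general regret bound of Theorem \ref{th:discounted-regret} and the hypothesis that the information ratio is uniformly bounded. Since $\Rc(D)$ is defined as an infimum over conditional distributions $\Prob(\tilde{A}\in\cdot\mid \theta)$, I would begin by fixing $\eta>0$ and selecting a benchmark action $\tilde{A}_\eta$ (really, a joint distribution of $(\tilde{A}_\eta,\theta)$) that is feasible for the program defining $\Rc(D)$ and whose objective value satisfies $I(\tilde{A}_\eta;\theta)\leq \Rc(D)+\eta$. Feasibility gives $\E[R^*-\mu(\tilde{A}_\eta,\theta)]\leq D$, i.e.\ $\E[\mu(\tilde{A}_\eta,\theta)]\geq R^*-D$, which matches the hypothesis of Theorem \ref{th:discounted-regret}.

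Next, I need $\tilde{A}_\eta$ to be expressible in the form $f(\theta,\xi)$ with $\xi$ independent of the disturbance sequence $(W_t)_{t\in\N}$. This is essentially free: any conditional law $\Prob(\tilde{A}_\eta\in\cdot\mid\theta)$ can be realized by pushing forward an exogenous uniform random variable $\xi$ through a measurable function $f(\theta,\xi)$, and we may take $\xi$ independent of the $W_t$'s without changing the joint law of $(\tilde{A}_\eta,\theta)$, hence without affecting $I(\tilde{A}_\eta;\theta)$ or $\E[R^*-\mu(\tilde{A}_\eta,\theta)]$. With this realization in hand, the uniform bound on the information ratio produces, for each such $\tilde{A}_\eta$, a policy $\psi_\eta$ with $\Gamma(\tilde{A}_\eta,\psi_\eta)\leq \Gamma_U+\eta$ (taking the infimum over $\psi$ up to $\eta$).

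Applying Theorem \ref{th:discounted-regret} to the pair $(\tilde{A}_\eta,\psi_\eta)$ yields
\[
\sregret(\alpha,\psi_\eta,D)\leq \sqrt{\frac{\Gamma(\tilde{A}_\eta,\psi_\eta)\, I(\tilde{A}_\eta;\theta)}{1-\alpha^2}}\leq \sqrt{\frac{(\Gamma_U+\eta)(\Rc(D)+\eta)}{1-\alpha^2}}.
\]
Letting $\eta\downarrow 0$ along a sequence and selecting the corresponding policies gives a sequence of policies whose satisficing regrets are bounded by quantities converging to $\sqrt{\Gamma_U\Rc(D)/(1-\alpha^2)}$; one can then either state the result with an infimum over policies or, in the common case that $\Rc(D)$ is attained and $\inf_\psi \Gamma(\tilde A,\psi)$ is attained, simply take $\eta=0$ to exhibit a single policy $\psi$ achieving the claimed bound.

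The only real subtlety is the two layers of infima (one inside $\Rc(D)$, one inside $\Gamma_U$) and the need to produce a \emph{single} policy that enjoys the stated inequality; this is handled by the $\eta$-approximation argument above, and is the reason the statement says ``there exists a policy'' rather than asking for the infimum to be attained. Apart from this bookkeeping, the proof is purely a substitution into Theorem \ref{th:discounted-regret}.
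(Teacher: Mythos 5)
Your proposal is correct and follows exactly the route the paper intends: the paper gives no separate proof, stating only that the result is an immediate consequence of Theorem \ref{th:discounted-regret}, obtained by taking $\tilde{A}$ to (approximately) attain the rate-distortion limit and $\psi$ to (approximately) attain the infimum defining $\Gamma_U$. Your additional care in realizing the optimal conditional law as $f(\theta,\xi)$ with $\xi$ exogenous, and in handling the two layers of infima via an $\eta$-approximation, is sound bookkeeping that the paper glosses over.
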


	\section{Application to Online Linear Optimization.}\label{sec: linear optimization}

Consider a special case of our formulation: the problem of learning to solve a linear optimization problem. Precisely, suppose expected rewards follow the linear model $\E[R_{t}|\theta, A_t]= \theta^\top A_t$ where $A_t\in \Ac \subset \mathbb{R}^p$, $\theta \in \mathbb{R}^p$, and $R_{t}\in \left[-\frac{1}{2},\frac{1}{2}\right]$ almost surely. We will consider several natural forms a feedback $Y_{t}$ the decision-maker may receive.

In each case, uniform bounds on the information ratio hold for satisficing Thompson sampling. More precisely, for any $\tilde{A}$ let  $\psi^{\rm STS}_{\tilde{A}}$ denote the strategy that randomly samples an action at each time $t$ by probability matching with respect to $\tilde{A}$, i.e. $\Prob(A_t \in \cdot | \hist) = \Prob(\tilde{A} \in \cdot | \hist)$. Applying the same proofs as in \cite{russo2016info} yields bounds of the form $\Gamma(\tilde{A}; \psi^{\rm STS}_{\tilde{A}}) \leq \lambda$, where $\lambda$ 
depends on the problem's feedback structure but not the choice of benchmark action. Now, let us choose $\tilde{A}$ to attain the rate distortion limit  \eqref{eq: rate distortion function for decision making}, so $I(\tilde{A}; \theta)=\Rc(D)$ and $\E[R^*-\mu(\tilde{A}; \theta) ] \leq D$. We denote by $\psi_D^{\rm STS}$ satisficing Thompson sampling with respect to this choice of satisfactory action.

\noindent{\bf Full Information.} Suppose $R_{t} =A_t^\top Z_t$ for a random vector $Z_t$ with $\E[Z_t | \theta, \hist] = \theta$. This is an extreme point of our formulation, where all information is revealed without active exploration. For all $\tilde{A}$, the information ratio is bounded as $\Gamma(\psi_{\tilde{A}}^{\rm STS}; \tilde{A})\leq 1/2$ and hence 
\[ 
\sregret(\alpha, \psi_D^{\rm STS}, D) \leq \sqrt{\frac{ \mathcal{R}(D)}{2(1-\alpha^2)}}.
\]	
\noindent{\bf Bandit Feedback.} Suppose the agent only observes the reward the action she chooses ($Y_{t}=R_{t}$). This is the so-called linear bandit problem. For all $\tilde{A}$, the information ratio is bounded as $\Gamma(\psi_{\tilde{A}}^{\rm STS}; \tilde{A})\leq p/2$.  This gives the regret bound
\[ 
\sregret(\alpha, \psi_D^{\rm STS}, D) 
\leq  \sqrt{\frac{ \mathcal{R}(D)p}{2(1-\alpha^2)}}.
\]
\noindent {\bf Semi-Bandit Feedback.} Assume again that $R_{t} =A_t^\top Z_t$ for all $a$. Take the action set $\Ac \subset \{0,1\}^p$ to consist of binary vectors where $\sum_{i=1}^{p} a_i \leq m$ for all $a\in \Ac$. Upon playing action $A_t=a$, the agent observes $Z_{t,i}$ for every component $\{i\in \{1,...,m\} :a_i =1 \} $ that was active in $a$. We make the additional assumption that the components of $Z_t$ are independent conditioned on $\hist$. Then, for all $\tilde{A}$, the information ratio is bounded as $\Gamma(\psi_{\tilde{A}}^{\rm STS}; \tilde{A})\leq p/2m$ and hence
\[ 
\sregret(\alpha, \psi^{\rm STS}_{D}, D) 
\leq  \sqrt{\frac{ \mathcal{R}(D)(p/m)}{2(1-\alpha^2)}}.
\]
By following the appendix of \cite{russo2016info}, each of these result can be extended gracefully to settings where noise distributions are \emph{sub-Gaussian}. For example, suppose $\theta$ follows a multivariate Gaussian distribution, and the reward at time $t$ is $R_t = \theta^\top A_t + W_t$ where $W_t$ is a zero mean Gaussian random variable. Then, if the variance of rewards $\E[(\theta^\top a + W-\E[\theta]^\top a)^2]\leq \sigma^2$
is bounded by some $\sigma^2$ for all $a$, the previous bounds on the information ratio scale by a factor of $\sigma$.

It is worth noting that these results immediately reduce to bounds on (non-satisficing) regret when the action space is finite. As mentioned in Subsection \ref{subsec: a rate distortion function}, for problems with a finite action set $\Rc(D) \leq H(A^*)\leq \log | \Ac|$ for all $D\geq 0$. For example, with a linear bandit with finite action set, \eqref{eq: sregret to regret} gives the bound 
\[
\regret( \alpha, \psi^{\rm STS}_{D}) \leq  \sqrt{\frac{ H(A^*) p}{2(1-\alpha^2)}} + \frac{D}{1-\alpha}, 
\]
so STS with a small satisficing level attains desirable regret bounds when the action set is not too large. A special case of this problem is the classical $k$ armed bandit problem, in which case $p=k$. We can reach similar conclusions in other cases by arguing $\Rc(D)$ grows slowly as $D\to 0$. We illustrate this idea for a Gaussian linear bandit below.

The next theorem considers an explicit choice of satisfactory action $\tilde{A}$. This yields a computationally efficient version of STS as well as explicit upper bounds on the rate-distortion function. As above, consider the case where $\theta\sim N(\mu, \Sigma)$ follows a multivariate Gaussian prior and reward noise is Gaussian. We study the optimizer $\tilde{A}=\argmax_{a \in \Ac} \langle a, \theta+\xi \rangle$ of a randomly perturbed objective. The small perturbation controls the mutual information between $\tilde{A}$ and $\theta$ without substantially degrading decision quality. 
It is easy to implement probability matching with respect to $\tilde{A}$ whenever linear optimization problems over $\tilde{A}$ are efficiently solvable. In particular, if $\mu_t = \E[\theta \mid\hist]$ and $\Sigma_t = \E[(\theta-\mu_t)(\theta - \mu_t)^\top \mid \hist]$ denote the posterior mean and covariance matrix, which are efficiently computable using Kalman filtering,  then by sampling $\hat{\theta}_t  \sim N(\mu_t, \Sigma_t)$ and $\hat{\xi}_t\sim  N(0, \Sigma_\xi)$ and setting $A_t = \argmax_{a\in \Ac} \langle a, \hat{\theta}_t + \hat{\xi}_t \rangle$ one has $ 
\Prob(A_t \in \cdot \mid \hist) = \Prob(\tilde{A} \in \cdot \mid \hist)$.

The result in the next theorem assumes the action set is contained within an ellipsoid  $\{ x\in \mathbb{R}^{p}  : x^\top Q^{-1} x \leq 1 \}$ and the resulting bound displays a logarithmic dependence on the eigenvalues of $Q$. Precisely, note that the trace of the matrix $Q$, or sum of its eigenvalues, provides one natural measure of the size of the ellipsoid. Our result also depends on the covariance matrix $\Sigma$ through the ${\rm Trace}(Q\Sigma)$. To understand this, consider applying similarity transforms to the parameter and action vectors so that $\theta' = \Sigma^{-1/2} \theta$ is isotropic and the set of action vectors is $\Ac'=\{\Sigma^{1/2}a : a \in \Ac\}$. This transformed action space is contained in the ellipsoid  $\{x : x^{\top} Q'^{-1} x \leq 1 \}$, where $Q' = \Sigma^{1/2} Q \Sigma$. Then ${\rm Trace}(Q')={\rm Trace}(Q\Sigma)$ provides a measure of the size of this ellipsoid. 

\begin{thm}
	\label{th:perturbed objective}
	Suppose $\Ac$ is a compact subset of the ellipsoid $\Ac \subset \{ x\in \mathbb{R}^{p}  : x^\top Q^{-1} x \leq 1 \}$ for some real symmetric matrix $Q$ and suppose $\theta\sim N(\mu,\Sigma)$ follows a $p$--dimensional multivariate Gaussian distribution. Set
	\[ 
	\tilde{A}= \argmax_{a \in \Ac} \langle a \,,\, \theta+\xi \rangle
	\]
	where $\xi$ is independent of $\theta$ and $\xi\sim N(0,\beta^2 \Sigma)$. For $\beta= D/\sqrt{{\rm Trace}(Q\Sigma)}$, 
	\[ 
	\E\left[\langle\theta \,,\, \tilde{A} \rangle\right] \geq \E \left[\langle \theta \,,\, A^* \rangle \right] -D
	\]
	and
	\[ 
	\Rc(D)\leq I(\tilde{A}; \theta) \leq \frac{p}{2} \log\left(1+ \frac{{\rm Trace}(Q\Sigma)}{D^2}\right).
	\]	
\end{thm}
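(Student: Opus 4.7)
The plan is to establish the two assertions separately: first the near-optimality claim $\E[\langle\theta, \tilde{A}\rangle] \geq \E[\langle\theta, A^*\rangle] - D$, and then the mutual information bound on $I(\tilde{A}; \theta)$.

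For the near-optimality bound, I would begin from the optimality of $\tilde{A}$ in the perturbed problem. Since $\langle \tilde{A}, \theta + \xi\rangle \geq \langle A^*, \theta + \xi\rangle$, rearranging yields $\langle \tilde{A}, \theta\rangle \geq \langle A^*, \theta\rangle + \langle A^* - \tilde{A}, \xi\rangle$. Taking expectations, $\E[\langle A^*, \xi\rangle] = 0$ because $\xi$ is independent of $\theta$ (and hence of $A^*$) and has mean zero, so it suffices to upper bound $\E[\langle \tilde{A}, \xi\rangle] \leq \E[\sup_{a \in \Ac}\langle a, \xi\rangle]$. The ellipsoid constraint $a^\top Q^{-1} a \leq 1$ implies via Cauchy--Schwarz that $\langle a, \xi\rangle = \langle Q^{-1/2} a, Q^{1/2}\xi\rangle \leq \|Q^{1/2}\xi\|$, so the supremum over $\Ac$ is bounded by $\|Q^{1/2}\xi\|$. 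Jensen's inequality then gives $\E[\|Q^{1/2}\xi\|] \leq \sqrt{\E[\xi^\top Q\xi]} = \sqrt{\beta^2\, {\rm Trace}(Q\Sigma)} = D$ for the chosen $\beta$, which delivers the claim.

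For the mutual information bound, the crucial observation is that $\tilde{A}$ is a deterministic function of the combined vector $\theta + \xi$, since this vector alone determines the linear objective being maximized. This yields the Markov chain $\theta \to \theta + \xi \to \tilde{A}$, so the data processing inequality gives $I(\tilde{A}; \theta) \leq I(\theta + \xi; \theta)$. The latter is available in closed form for jointly Gaussian vectors: writing $I(\theta + \xi; \theta) = h(\theta + \xi) - h(\theta + \xi \mid \theta) = h(\theta + \xi) - h(\xi)$ and using that $\theta + \xi \sim N(\mu, (1 + \beta^2)\Sigma)$ while $\xi \sim N(0, \beta^2\Sigma)$, the determinant ratio simplifies to $I(\theta + \xi; \theta) = \frac{p}{2}\log\bigl(\frac{1+\beta^2}{\beta^2}\bigr) = \frac{p}{2}\log(1 + 1/\beta^2)$. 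Substituting $\beta^2 = D^2/{\rm Trace}(Q\Sigma)$ yields the stated bound.

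The proof is largely routine, and I do not anticipate a serious obstacle. The main conceptual step is recognizing that the norm dual to the $Q^{-1}$-ellipsoid is the $Q$-norm, so that the perturbation covariance and the action set combine naturally through $\E[\xi^\top Q\xi] = \beta^2\, {\rm Trace}(Q\Sigma)$; this is what makes the same quantity govern both the regret overhead (through Cauchy--Schwarz) and the information cost (through the Gaussian mutual information formula), and in turn what lets the bound on $I(\tilde{A}; \theta)$ depend only logarithmically on the effective scale ${\rm Trace}(Q\Sigma)/D^2$. A minor technicality arises if $\Sigma$ is singular; one may restrict to its range and the argument goes through with $p$ replaced by the rank of $\Sigma$, yielding a sharper bound.
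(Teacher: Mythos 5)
Your proposal is correct and matches the paper's proof in all essentials: both reduce the suboptimality gap to bounding $\E[\sup_{a\in\Ac}\langle a,\xi\rangle]$ by $\E[\|\xi\|_{Q}]\leq\sqrt{\beta^{2}\,{\rm Trace}(Q\Sigma)}=D$, and both bound the mutual information via the data processing inequality for $\theta\to\theta+\xi\to\tilde{A}$ followed by the Gaussian entropy formula giving $\frac{p}{2}\log(1+1/\beta^{2})$. The only cosmetic difference is that you obtain the first step from the pointwise optimality of $\tilde{A}$ together with $\E[\langle A^{*},\xi\rangle]=0$, whereas the paper applies Jensen's inequality to $\E[\max_{a\in\Ac}\langle a,\theta+\xi\rangle]$; your remark on singular $\Sigma$ is a correct minor refinement.
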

\begin{proof} 
By Jensen's inequality
\[
\E[ \langle\tilde{A} \,,\, \theta+\xi\rangle] = \E \left[\max_{a \in \Ac} \langle a \,,\, \theta+\xi\rangle \right] \geq   \E \left[\max_{a \in \Ac} \langle a \,,\, \theta \rangle \right] = \E \left  \langle A^* \,,\, \theta\rangle \right].
\]
This implies
\[ 
\E \left[  \langle A^* \,,\, \theta\rangle \right] - \E \left[  \langle \tilde{A} \,,\, \theta\rangle \right] \leq \E[ \langle\tilde{A} \,,\, \xi\rangle] \leq \E \left[\max_{a \in \Ac} \langle a \,,\, \xi \rangle \right] \leq \E \left[  \max_{x \,: \|x\|_{Q^{-1}} \leq 1 } \langle a \,,\, \xi \rangle \right] = \E\left[\| \xi\|_{Q} \right]
\]
where the final equality uses the explicit formula for the maximum of a linear function over an ellipsoid. Now, 
\[ 
\E\left[\| \xi\|_{Q} \right] \leq \sqrt{\E[\xi^\top Q \xi] } = \sqrt{\E[{\rm Trace}(\xi^\top Q \xi)] }=\sqrt{{\rm Trace}( Q \E[\xi \xi^\top])} = \sqrt{\beta^2 {\rm Trace}( Q \Sigma)}= D. 
\]
Next we derive the bound on mutual information.  We have
\begin{eqnarray*} 
	I(\tilde{A}; \theta) \leq I(\theta+W; \theta) &=& H(\theta+W)- H(\theta+W | \theta)\\
	&=&   H(\theta+W)-H(W) \\
	&=& \frac{1}{2}\log\left( \frac{\det(\Sigma+\beta^2\Sigma)}{\det(\beta^2\Sigma)} \right)\\
	&=&\frac{1}{2}\log\left( \frac{\det((1+\beta^2)\Sigma)}{\det(\beta^2\Sigma)} \right)\\
	&=& \frac{p}{2} \log\left(1+\frac{1}{\beta^2} \right)\\
	&=&\frac{p}{2} \log\left(1+\frac{{\rm Trace}( Q \Sigma)}{D^2} \right)
\end{eqnarray*}
where $\det(\cdot)$ denotes the determinant of a matrix. Here the first inequality uses the data processing inequality, 
the third equality uses the explicit form the entropy of a multivariate Gaussian ($H(\theta)=\frac{1}{2}\log\left(2\pi e \det(\Sigma) \right)$) and the penultimate equality uses that $\det(c\Sigma)=c^p \det(\Sigma)$ for any scalar $c$.  
\end{proof}

\section{Application to the Infinite-Armed bandit.}\label{sec: satisficingTS}
This section considers a generalization of the deterministic infinite-armed bandit  problem in the introduction that allows for noisy observations and non-uniform priors. The action space is $\mathcal{A} = \{1,2,\ldots\}$. We assume $R_t \in [0,1]$ almost surely and $Y_t=R_t$, meaning the agent only observes rewards. The mean reward of action $a$ is $\mu(\theta, a) = \theta_a\in [0,1]$ where the prior distribution of $\theta_a$ is independent. Let $R^*$ denote the sepremal value in the support of $\theta_a$, so $\sup_{a \in \Ac} \theta_a =R^*$ almost surely.

\subsection{STS for the Infinite-Armed Bandit Problem.}\label{subsec: sts for infinite arms}
We consider the simple satisficing action defined in the introduction: $\tilde{A} = \min\{a \in \Ac: \theta_a \geq R^* - D\}$.  Rather than continue to explore until identifying the optimal action $A^*$, we will settle for the \emph{first}\footnote{ Of course, there is nothing crucial about this ordering on actions. We can equivalently construct a randomized order in which actions are sampled; for each realization of the random variable $\xi$, let $\pi_{\xi} : \N \to \N$ be a permutation, and take $\tilde{A} = \min \{\pi_{\xi}(a) : \theta_a \geq 1-D \}$.} action known to yield reward within $D$ of optimal.

We study satisficing Thompson sampling where actions are selected by probability matching with respect to $\tilde{A}$. Note that an algorithm for this problem must decide whether to sample a previously tested action -- and if so which one to sample -- or whether to try out an entirely new action. Let $\Ac_t = \{A_0,\ldots, A_{t-1}\}$ denote the set of previously sampled actions. STS may sample an untested action $A_t \notin \Ac$, and does so with probability
\[ 
\Prob(A_t \notin \Ac_{t} | \hist ) = \Prob(\tilde{A}\notin \Ac_{t} | \hist).
\]
equal to the posterior probability no satisfactory action has yet been sampled. The remainder of the action probabilities are allocated among previously tested actions, with
\[ 
\Prob(A_t = a | \hist) = \Prob(\tilde{A} =a | \hist) \qquad \forall a \in \Ac_t. 
\]

There is a simple algorithmic implementation of STS that mirrors computationally efficient implementations of Thompson sampling (TS). At time $t$, TS selects a random action $A_t$ via probability matching with respect to $A^*$. Algorithmically, this is usually accomplished by first sampling $\hat{\theta}_t \sim \Prob(\theta \in \cdot | \hist)$ and solving for $A_t \in \argmax_{a \in \mathcal{A}} \mu(a, \hat{\theta}_t)$. Similarly, we can implement STS by sampling and approximately optimizing a posterior sample. Over each $t$th period, STS selects an action $A_t$ as follows: \\
\begin{enumerate}
	\item For each $a\in \Ac_t$, sample $\hat{\theta}_a \sim \Prob(\theta_a \in \cdot| \hist)$
	\item Let $\hat{\tau} = \min\left\{\tau \in \{1,\ldots,t-1\} :   \mu(A_\tau, \hat{\theta}_t) \geq R^*-D \right\}$
	\item If $\hat{\tau}$ is not null set $A_t=A_{\hat{\tau}}$. Otherwise, play an untested action $A_t\notin \Ac_t$.  
\end{enumerate}
\vspace{\baselineskip}
Note that $D \geq 0$ is supplied to the algorithm as a tolerance parameter.  When $D = 0$, STS is equivalent to TS.  Otherwise,
STS attributes preference to selecting previously selected actions, which can yield substantial benefit in the face of time preference. 

This definition can be generalized to treat problems with a large, but finite, number of independent arms. Define the satisficing action $\tilde{A} = \min\{ a \in \Ac \mid \theta_a \geq \sup_{a' \in \Ac} \mu(\theta, a')  -D \}$. For the infinite armed bandit, $\sup_{a' \in \Ac} \mu(\theta, a')  =R^*$ with probability 1, but for problems with a finite number of arms $\max_{a' \in \Ac} \mu(\theta, a')$ is not known apriori. One can efficiently sample from this satisficing action by modifying step 2 above with the alternative definition $\hat{\tau} = \min\left\{\tau \in \{1,\ldots,t-1\} :   \mu(A_t, \hat{\theta}_t) \geq \sup_{a \in \Ac} \mu(a, \hat{\theta}_t) -D \right\}$.

\subsection{Computational Comparison of STS and TS.}
\label{se:computations}

We close with a simple computational illustration of the potential benefits afforded by STS. We consider two many-armed bandit problems, and demonstrate that per-period regret of STS diminishes much more rapidly than that of TS over early time periods. 

We consider problems with 250 actions, where the mean reward $\theta_a$ associated with each action $a \in \{1,\ldots,250\}$ is independently sampled uniformly from $[0,1]$.  We first consider the many-armed deterministic bandit, for which there is no observation noise.  Figure \ref{fig:computational-results}(a) presents per-period regret of TS and STS over 500 time periods, averaged over 5000 simulations, each with an independently sampled problem instance. STS is applied with tolerance parameter 0.05.  We next consider incorporating observation noise. In particular, instead of observing $\theta_a$, after selecting an action $a$, we observe a binary reward that is one with probability $\theta_a$. Figure \ref{fig:computational-results}(b) displays the results of this experiment.

\begin{figure}[h!]
	\centering
	\begin{subfigure}{.5\textwidth}
		\centering
		\includegraphics[height=2.1in]{./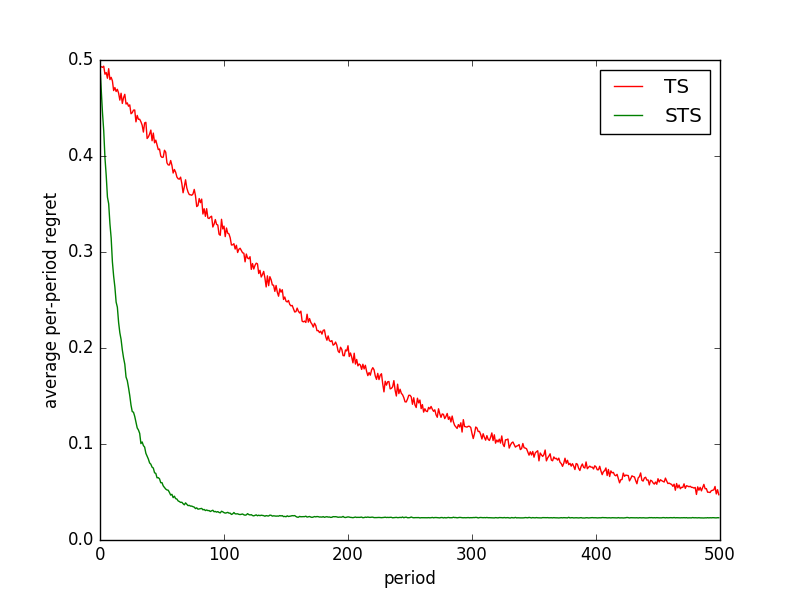}
		\label{fig:pic1}
	\end{subfigure}%
	\begin{subfigure}{.5\textwidth}
		\centering
		\includegraphics[height=2.1in]{./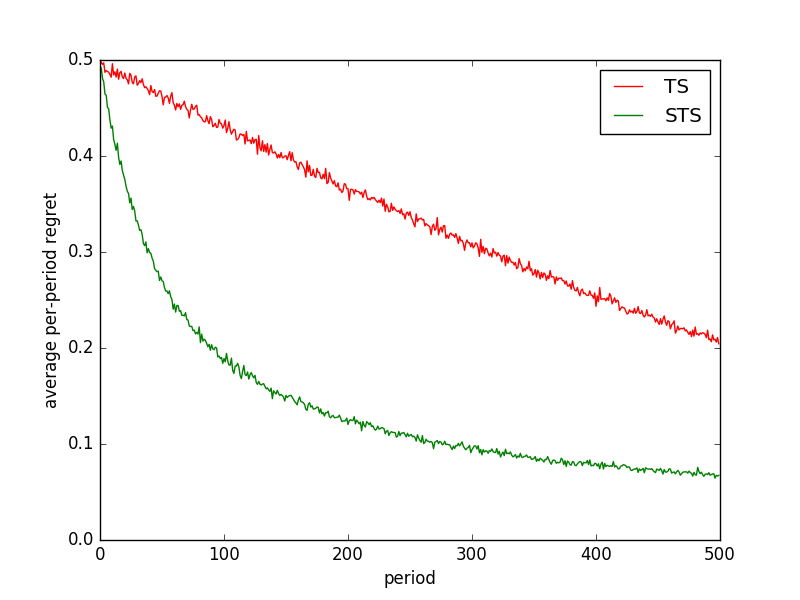}
		\label{fig:pic2}
	\end{subfigure}
	\caption{TS versus STS for the (a) many-armed deterministic bandit and (b) many-armed bandit with observation noise.}
	\label{fig:computational-results}
\end{figure}

\subsection{Information Ratio Analysis of the Infinite-Armed Bandit.}\label{subsec: infinite noisy}

The following theorem provides a discounted regret bound for STS in the infinite armed bandit.
The result follows from bounding the problems information ratio and the mutual information $I(\tilde{A}; \theta)$ and applying the general regret bound of Theorem \ref{th:discounted-regret}. This requires substantial additional analysis, the details of which are delayed until Subsection \ref{subsec: proof of of infinite armed bandit bound}.
\begin{thm}
	\label{thm:noisy regret bound} Consider the the infinite-armed bandit with noisy observations, and let $\tilde{A} = \min\{a \in \Ac: \theta_a \geq R^*-D\}$. Denote the STS policy with respect to $\tilde{A}$ by $\psi_D^{\rm STS}$. Then, 
	\[
	I( \tilde{A} ; \theta) \leq 1+ \log(1/\delta) \quad \mathrm{and} \quad \Gamma\left(\tilde{A}, \psi_D^{\rm STS} \right)  \leq 6+ 4/\delta  + (2/\delta)\log\left( \frac{1}{1-\alpha^2} \right)
	\]
	where $\delta=\Prob(\theta_a \geq R^*- D)$. Together with Theorem \ref{th:discounted-regret} this implies
	\begin{align}\nonumber
	\sregret(\alpha, \psi_D^{\rm STS}, D)
	&\leq \min\left\{ \sqrt{ \frac{\left(6+ 4/\delta  + (2/\delta)\log\left( \frac{1}{1-\alpha^2} \right) \right)(1+\log(1/\delta))   }{1-\alpha^2}} , \, \frac{1}{1-\alpha} \right\} \\ \label{eq: noisy regret bound simplified}
	&= \tilde{O}\left( \min\left\{ \sqrt{\frac{ 1/\delta   }{1-\alpha}} \, , \, \frac{1}{1-\alpha} \right\} \right).
	\end{align}
\end{thm}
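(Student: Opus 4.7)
The plan is to establish the two displayed bounds, on $I(\tilde A;\theta)$ and on the information ratio $\Gamma(\tilde A,\psi_D^{\rm STS})$, separately, and then combine them via Theorem \ref{th:discounted-regret} to get the first branch of the $\min$ in \eqref{eq: noisy regret bound simplified}. The second branch is the trivial bound $\sregret(\alpha,\psi_D^{\rm STS},D)\leq \sum_{t=0}^\infty \alpha^t\,\E[R^*-R_t]\leq 1/(1-\alpha)$, which follows from $R_t\in[0,1]$.

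\textbf{Mutual information.} Since $\tilde A$ is a deterministic function of $\theta$, $I(\tilde A;\theta)=H(\tilde A)$. The prior independence of the $\theta_a$ and their common success probability $\Prob(\theta_a\geq R^*-D)=\delta$ force $\tilde A=\min\{a:\theta_a\geq R^*-D\}$ to follow a geometric distribution with parameter $\delta$, and a direct entropy computation gives
\[
H(\tilde A)=\log(1/\delta)-\frac{(1-\delta)\log(1-\delta)}{\delta}\leq 1+\log(1/\delta),
\]
where the inequality uses the elementary bound $-(1-\delta)\log(1-\delta)\leq \delta$.

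\textbf{Information ratio.} This is the main technical step. I would organize the argument around the random posterior probability $P_t:=\Prob_t(\tilde A\notin \Ac_t)$ that the satisficing arm has not yet been sampled. By the probability-matching property of STS, the algorithm plays a fresh arm with probability exactly $P_t$ and otherwise plays $a\in \Ac_t$ with conditional probability $\Prob_t(\tilde A=a\mid \tilde A\in \Ac_t)$. Two per-period estimates then combine. First, the per-period regret $\E_t[\tilde R-R_t]$ should be of order $P_t$: on the event $\{\tilde A\in \Ac_t\}$ probability matching acts on the \emph{finite} set $\Ac_t$ and admits a standard Thompson-sampling-style control, while on its complement the instantaneous regret is at most $1$ and is weighted by $P_t$. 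Second, the per-period information $I_t(\tilde A;(A_t,Y_t))$ should be of order $\delta P_t$: a pull of an untested arm reveals, via its noisy reward, a Bernoulli-like signal of whether that arm clears $R^*-D$, with KL content $\Theta(\delta)$, and such pulls occur with probability $P_t$. Squaring the first estimate and dividing by the second yields a per-period information ratio of order $P_t/\delta$, so
\[
\Gamma(\tilde A,\psi_D^{\rm STS}) \;\lesssim\; (1-\alpha^2)\sum_{t=0}^\infty \alpha^{2t}\,\frac{P_t}{\delta}.
\]
Because $P_t$ decays geometrically in the number of fresh arms tried, and that count is itself paced by $P_t$ through probability matching, a direct calculation will show that $(1-\alpha^2)\sum_t \alpha^{2t}\,\E[P_t]$ is $O(\min\{1,\,(1-\alpha^2)\log(1/(1-\alpha^2))\})$, producing the constant term $O(1/\delta)$ together with the $(2/\delta)\log(1/(1-\alpha^2))$ piece in the claimed bound.

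\textbf{Assembly and main obstacle.} Substituting the two bounds into Theorem \ref{th:discounted-regret} delivers the first branch of the $\min$; the trivial bound handles the second. The hardest part will be justifying the per-period information lower bound $I_t(\tilde A;(A_t,Y_t))\gtrsim \delta P_t$ under noisy feedback: this requires a Pinsker or $\chi^2$-style argument relating the KL divergence between the reward distribution of a freshly drawn arm and its posterior mixture to the $\delta$-scaled posterior probability that this arm is $\tilde A$. Careful tracking of constants so that the final bound reads $6+4/\delta+(2/\delta)\log(1/(1-\alpha^2))$ rather than a looser multiple is a secondary but non-trivial concern, and justifies deferring the detailed argument to Subsection \ref{subsec: proof of of infinite armed bandit bound}.
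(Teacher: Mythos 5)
Your treatment of the mutual information bound is exactly the paper's: $\tilde{A}$ is a deterministic function of $\theta$, so $I(\tilde{A};\theta)=H(\tilde{A})$ with $\tilde{A}$ geometric with parameter $\delta$, and the entropy computation gives $1+\log(1/\delta)$. The trivial branch $\sregret\leq 1/(1-\alpha)$ is also fine. The gap is in the information-ratio step, where both of your per-period estimates are incorrect. First, the per-period regret $\E_t[\tilde{R}-R_t]$ is \emph{not} of order $P_t=\Prob_t(\tilde{A}\notin\Ac_t)$. It decomposes as $\Delta_{t,1}+\Delta_{t,2}$ where $\Delta_{t,2}=P_t\,L$ is indeed controlled by $P_t$, but $\Delta_{t,1}=\sum_{a\in\Ac_t}\Prob_t(\tilde{A}=a)\bigl(\E_t[\theta_a\mid\tilde{A}=a]-\E_t[\theta_a]\bigr)$ — the regret from not knowing \emph{which} of the already-sampled arms is $\tilde{A}$ — can be of constant order even when $P_t\approx 0$ (e.g., two sampled arms each with posterior probability $1/2$ of being $\tilde{A}$ under noisy Bernoulli feedback). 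Second, the information contributed by fresh pulls scales as $\delta P_t^2 L^2$, not $\delta P_t$: the relevant lower bound is $\Prob_t(A_t=a)\cdot\Prob_t(\tilde{A}=a\mid A_t=a)\cdot(\text{KL term})$ summed over untested $a$, which picks up \emph{two} factors of $P_t$. So your claimed per-period ratio $\lesssim P_t/\delta$ does not hold, and the subsequent discounted-sum calculation (which in any case seems to drop a factor of $1/\delta$) cannot be repaired as stated.

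The paper's route is structurally different: it lower-bounds $I_t(\tilde{A};(A_t,Y_t))$ by a sum $G_{t,1}+G_{t,2}$ matching the regret decomposition $\Delta_{t,1}+\Delta_{t,2}$ term by term (both via Pinsker), with $G_{t,1}\geq\frac{2}{|\Ac_t|}\Delta_{t,1}^2$ and $G_{t,2}=2P_t^2\delta L^2$, and then a two-case argument (according to whether $\Delta_{t,1}\geq\Delta_{t,2}$) yields the one-step bound $\Gamma_t\leq 2|\Ac_t|+2/\delta$. The $(2/\delta)\log\bigl(\tfrac{1}{1-\alpha^2}\bigr)$ term in the theorem then comes from the separate estimate $\E[|\Ac_t|]\leq 2+\log(t)/\delta$ averaged against $\alpha^{2t}$ — i.e., from the growth of the set of sampled arms — not from the decay of $P_t$ as your sketch suggests. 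You would need to add the $|\Ac_t|$-dependent piece (and its control) to your argument; without it the bound you are aiming for cannot be reached.
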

The final expression \eqref{eq: noisy regret bound simplified} uses that 
$1-\alpha \leq 1-\alpha^2 \leq 2(1-\alpha)$.  The upper bound of $1/(1-\alpha)$ is naive and applies to all algorithms. The bound of order $\sqrt{\frac{ 1/\delta   }{1-\alpha}}$ requires an intelligent balance of exploration and exploitation; it is much stronger when the prior probability $\delta$ an arm is a satisficing arm is not too small.

\subsection{A Lower Bound for the Infinite-Armed Bandit}

This section establishes a lower bound on regret that matches the scaling of the upper bound in Theorem \ref{thm:noisy regret bound}. In this problem, $\epsilon$ is the fraction of arms that are exactly optimal. For our purposes, the interesting regime is where $\epsilon \ll 1-\alpha$ but $\delta \gg 1-\alpha$, in which case identifying an optimal arm is hopeless but it is worthwhile to search for a satisficing arm. In that regime, Theorem \ref{thm:noisy regret bound} shows a bound on satisficing regret of $\tilde{O}\left( \sqrt{\frac{1/\delta}{1-\alpha} } \right)$ and Theorem \ref{thm: lower bound} shows this is unimprovable in general. The specific threshold on $\epsilon$ in the theorem statement was chosen for analytical convenience and could likely be tightened.

\begin{thm}\label{thm: lower bound}
	Fix any $\alpha\in (0,1)$, $\delta \in (0, 1/2)$ and $D \in (0,1/4)$. Consider an instance of the infinite armed bandit problem in which, for all $a\in \Ac$,
	\begin{align}\label{eq: lower bound construction}
	\Prob\left(\theta_a = \frac{1}{2}-\Delta \right) = 1-\delta  && 
	\Prob\left(\theta_a = \frac{1}{2} \right)  = \delta - \epsilon  &&
	\Prob\left(\theta_a = \frac{1}{2}+D \right)  = \epsilon, 	
	\end{align}
	for $\Delta=\min\left\{\frac{1}{4}  \, ,  \frac{1}{4\sqrt{2} } \cdot \sqrt{\frac{1-\alpha }{\delta}}   \right\}$ and $\epsilon \leq \frac{1}{36}  \cdot \min\left\{ (1-\alpha)^2  \, , \,      \frac{(1-\alpha)^3}{2\delta}  \right\}$.
	Suppose $R_t \in \{0,1\}$ with $\Prob(R_t = 1 \mid \theta, A_t, \hist) = \theta_{A_t}$. Then,
	\[
	\inf_{\psi} \,\sregret(\alpha, \psi, D)  \geq    \frac{1}{32} \cdot \min\left\{  \frac{1}{1-\alpha} \, , \,    \sqrt{\frac{1/2\delta}{1-\alpha}}  \right\}
	\] 
	
	where the infimum is over all adaptive policies. 
\end{thm}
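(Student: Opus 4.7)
The plan is to reduce the three-type construction to a simpler two-type prior (by absorbing the small-probability ``great'' arms into a lower-order error) and then lower-bound the satisficing regret in a case analysis matching the two branches of the $\min$. Write $T = 1/(1-\alpha)$ for the effective horizon. For the reduction, because $\epsilon \leq (1-\alpha)^2/36$, the expected number of distinct great arms (mean $1/2+D$) ever encountered by any policy is at most $T\epsilon$ (by a Wald-type identity: the arms encountered are a stopping-time-many iid Bernoulli$(\epsilon)$ draws), each can be pulled at most $T$ times in discounted measure, so the expected discounted great-arm contribution to $\sregret$ is bounded in absolute value by $DT^2\epsilon \leq 1/144$, strictly smaller than the $1/32$ we aim for. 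Hence it suffices to lower-bound $\sregret$ under the two-type prior $\Prob(\theta_a=1/2-\Delta)=1-\delta$ and $\Prob(\theta_a=1/2)=\delta$. By exchangeability, I index arms $\eta_1,\eta_2,\ldots$ in the order of their first pull, write $\sigma_j$ for the time $\eta_j$ is first pulled, and note the types $\theta_{\eta_j}$ are iid under this prior.

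In the first regime $1-\alpha \geq 2\delta$, one has $\Delta=1/4$ and the target is $\sregret \geq T/32$; a direct Bayesian argument suffices. Set $M = \lceil 1/(4\delta)\rceil$, so the regime condition yields $M \geq T/2$. Let $\mathcal{E}_M$ be the event that $\eta_1,\ldots,\eta_M$ are all bad; then $\Prob(\mathcal{E}_M) = (1-\delta)^M \geq \exp(-\delta M/(1-\delta)) \geq e^{-1/2}$ for $\delta \leq 1/2$. On $\mathcal{E}_M$, every pull before time $\sigma_{M+1}$ is on a bad arm, and the deterministic bound $\sigma_{M+1} \geq M \geq T/2$ together with $\alpha^{T/2} \leq e^{-1/2}$ yields $\sum_{t<\sigma_{M+1}}\alpha^t \geq T(1-e^{-1/2})$. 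Combining, $\sregret \geq e^{-1/2}(1-e^{-1/2})\cdot(T/4) > T/32$.

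In the second regime $1-\alpha < 2\delta$, one has $\Delta=\sqrt{(1-\alpha)/\delta}/(4\sqrt{2})$ and the target is $\sregret \geq \Delta T/8$. The previous argument collapses because $M=O(1/\delta)$ now satisfies $M(1-\alpha)\ll 1$, and the first $M$ pulls accumulate only $\Delta M = O(\sqrt{(1-\alpha)/\delta^3})$ in regret, which can be much smaller than the target. The lower bound here must capture the information-theoretic cost of distinguishing a mediocre arm from a bad one, which by Neyman-Pearson/Pinsker requires $\Omega(1/\Delta^2)=\Omega(T\delta)$ samples per arm. I would adapt the Auer-Cesa-Bianchi-Freund-Schapire $K$-armed bandit lower bound: set $K = \lceil 1/(2\delta)\rceil$ and consider the family $\{\Prob^{(j)}\}_{j=1}^K$ where $\Prob^{(j)}$ places $\theta_{\eta_j}=1/2$ and $\theta_{\eta_i}=1/2-\Delta$ for $i \neq j$. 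The chain rule bounds the KL divergence of the observation sequence between any $\Prob^{(j)}$ and the ``all bad'' reference by $2\Delta^2 \E[N_{\eta_j}]$, where $N_{\eta_j}$ is the undiscounted pull count; averaging over $j$ with the total-budget constraint $\sum_j \E[N_{\eta_j}] \leq O(T)$ and applying Fano yields a regret lower bound of order $\sqrt{KT}\cdot\Delta = \Theta(\Delta T)$ against the mixture prior (which is close in total variation to our two-type prior on the high-probability event that the first $K$ arms contain a mediocre one, probability $\geq 1-e^{-1/2}$). Any exploration of arms with index $>K$ only adds to the discounted regret, so the bound propagates to the infinite-armed problem.

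The main obstacle is the Regime~II argument. A global Le Cam comparison between the two-type prior and the ``all bad'' reference is too weak, because the cumulative KL divergence $\Theta(\Delta^2 T) = \Theta(1/\delta)$ is not small and Pinsker then gives no useful total-variation bound. The remedy is the family of $K$ alternatives combined with Fano, which effectively distributes the information budget across arms; this is the standard but technically delicate heart of the proof. Tracking constants through the exchangeability reduction, the discount-to-horizon conversion (exploiting $\alpha^{T}\geq e^{-1}$ so that discounted regret over $[0,T]$ controls undiscounted regret up to a universal factor), the Fano step, and the $\epsilon$-absorption, and ensuring the bound holds uniformly across the boundary $1-\alpha\approx 2\delta$ between the two regimes, is the most tedious part of the bookkeeping.
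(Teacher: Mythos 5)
Your Regime~I argument and the $\epsilon$-absorption step are essentially sound, but Regime~II --- the heart of the theorem --- has a genuine gap, and it stems from a miscalculation that led you to discard the approach that actually works. You reject a global Le Cam comparison against the ``all arms bad'' reference measure on the grounds that the cumulative KL divergence is $\Theta(\Delta^2 T)=\Theta(1/\delta)$. That computation is wrong: under the reference measure $\mathbb{Q}$ in which every reward is Bernoulli$(1/2-\Delta)$ regardless of $\theta$, the observations are independent of $\theta$, so the policy plays a satisficing arm only a $\delta$ fraction of the time in expectation; since the per-pull KL contribution $d(1/2-\Delta\,\|\,1/2)\leq 4\Delta^2$ is nonzero only on such pulls, the chain rule gives a cumulative divergence of order $\delta\Delta^2/(1-\alpha)$, which with $\Delta^2=\tfrac{1}{32}\cdot\tfrac{1-\alpha}{\delta}$ is a small constant. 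Pinsker then yields $\Prob(A_\tau\in S_\theta)\leq \delta+\Delta\sqrt{2\delta/(1-\alpha)}+O(\sqrt{\epsilon/(1-\alpha)})$ for $\tau\sim\mathrm{Geom}(1-\alpha)$, and combining this with the identity $\E\sum_t\alpha^t f(A_t)=(1-\alpha)^{-1}\E[f(A_\tau)]$ gives the theorem in both regimes simultaneously. This change-of-measure argument is precisely the paper's proof; your case split and separate Regime~I construction are unnecessary.

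The Fano-type replacement you sketch for Regime~II does not close the gap as stated. The family $\{\Prob^{(j)}\}$ of single-spike priors is \emph{not} close in total variation to the iid two-type prior: among the first $K=\lceil 1/(2\delta)\rceil$ arms, the number of mediocre ones under the iid prior is approximately Poisson with mean $1/2$, so the events of zero and of two-or-more mediocre arms each carry constant probability, and neither is represented in your mixture. Since the theorem asserts a lower bound on Bayes regret under this specific iid prior (not a minimax bound over a class of instances), you cannot pass to a more convenient prior without a quantitative coupling, and none is supplied; the usual ``minimax $\geq$ Bayes'' direction does not help here. Even granting that reduction, the Fano/ACFS step itself is left as a sketch with unverified constants. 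As written, the proposal therefore does not establish the bound in the regime $1-\alpha<2\delta$.
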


\subsection{Open Question: Gap Dependent Analysis of STS.}
It should be noted that Theorem \ref{thm: lower bound} is a worst-case construction. It shows that, for a given discount factor $\alpha$, satisficing level $D$ and fraction of satisficing arms $\delta$, there exists a hard instance of an infinite armed bandit problem in which the scaling of \eqref{eq: noisy regret bound simplified} is unavoidable. Stronger performance guarantees are likely possible for more benign problems, however. Here, we highlight one open problem in this direction. 

As shown in \cite{berry1997bandit} and \cite{wang2009algorithms}, when the agent begins with a uniform prior over each $\theta_a$, it is possible to attain undscounted regret that scales as $\E\left[ \sum_{t=1}^{T} (R^*-R_t) \right] = O\left( \sqrt{T} \right)$. While the worst-case construction in Theorem \ref{thm: lower bound} shows that Theorem \ref{thm:noisy regret bound} is tight without additional assumptions, it seems to yield an overly conservative regret bound of $O\left(T^{2/3} \right)$ for this specific problem. To understand this, note that if the agent begins with a uniform prior over each $\theta_a$, we find $\delta=D$. Choosing $D\approx (1-\alpha)^{2/3}$ to minimize the regret upper bound from combining Theorem \ref{thm:noisy regret bound} with Equation \eqref{eq: discounted-regret}, we find ${\rm Regret}(\alpha, \psi^{D}) \leq \tilde{O}\left( \frac{1}{(1-\alpha)^{2/3}}  \right),$ where $\psi^{D}$ is denotes satisficing Thompson sampling applied with parameter $D$. Since $\frac{1}{1-\alpha}$ is the effective time horizon in the problem, this roughly corresponds to a regret bound of $\tilde{O}(T^{2/3})$. 

Simulations suggest that such a bound is conservative, and STS actually attains the optimal $\Theta(\sqrt{T})$ regret scaling in this problem. To test this, we applied STS over a range of horizons $T\in \{ e^{5}, e^{5.5},\cdots, e^{10.5}\}$. For each choice of horizon, we used the satisficing parameter $D=3/\sqrt{T}$ and ran 500 independent trials. The numerical constant $3$ was selected in a somewhat ad-hoc manner and may be further optimized by tuning it in simulation. Figure \ref{fig:regret scaling} seems to suggest 
$\E\left[ \sum_{t=1}^{T} (R^*-R_t) \right] = \Theta\left( \sqrt{T} \right)$, since the logarithm of regret scales linearly as $\log(T)/2$. An analogous experiment in the discounted setting suggests ${\rm Regret(\alpha, \psi_D)} = \Theta\left( \frac{1}{\sqrt{1-\alpha}}  \right)$ as $\alpha \to 1$. 

To understand the two different scalings of regret, it may be helpful to draw an analogy to the standard $k$ armed bandit problem. When there is a fixed gap of $\Delta>0$ between the best and second best arm, it is possible to provide regret bounds of $O( \log(T) /\Delta )$ which scale very slowly with the time horizon but degrade as $\Delta$ shrinks. In worst-case instances, $\Delta \approx 1/\sqrt{T}$ is small relative to the horizon and regret bounds of $O(\sqrt{T})$, which are completely independent of the gap, are the best possible. Our analysis in Theorem \ref{th:discounted-regret} was effectively gap-independent, as it did not make assumptions about the separation between satisficing actions a non-satisficing actions. By imposing the additional assumption that each $\theta_a$ is drawn from a uniform prior, we ensure that most arms are easily distinguished from the satisficing arms--especially as the satisficing threshold $D$ tends to zero--- which is what makes the $O(\sqrt{T})$ bound possible. One may be able to leverage finite time gap-dependent analyses of Thompson sampling \cite{agrawal2013further} to show an $O(\sqrt{T})$ regret bound for STS under a restricted class of priors, but we leave this as an open question that is beyond the scope of this work. 

\begin{figure}[h!]
	\centering
	\begin{subfigure}{.5\textwidth}
		\centering
		\includegraphics[height=2.1in]{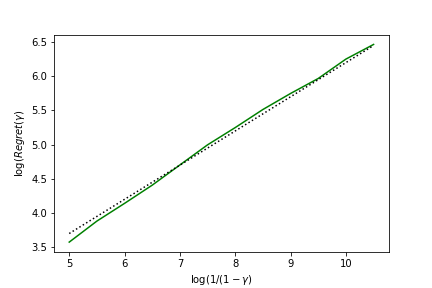}
	\end{subfigure}%
	\begin{subfigure}{.5\textwidth}
		\centering
		\includegraphics[height=2.1in]{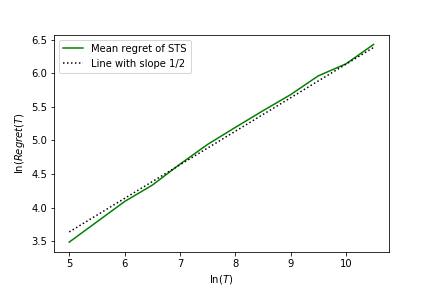}
	\end{subfigure}
	\caption{Scaling of regret in an infinite armed bandit with uniform prior for (a) a discounted infinite horizon and (b) an undiscounted finite horizon problem.}
	\label{fig:regret scaling}
\end{figure}

\section{Application of STS to the Hierarchical Infinite-Armed bandit.}\label{sec: Hierarchical}

This section offers a preliminary study of satisficing in the hierarchical infinite armed bandit as described in Example 2 of Section \ref{sec:intro}. Algorithms that succeed on this example must simultaneously leverage prior beliefs, generalize across arms, and satisfice.  

Example \ref{ex: hiearchical satisficing action} of Section \ref{sec: satisficing} describes a satisficing action $\tilde{A}$ for this problem. Precisely, according to Equation \eqref{eq: hierarchical satisficing}, $\tilde{A} = \min\{a \in \Ac(\theta_0) \mid \theta_a \geq 1-D \}$ where $\Ac(\theta_0) =  \argmax_{a' \in \Ac} \langle \phi(a') \, , \, \theta_0 \rangle  \}$. Satisficing Thompson sampling performs probability matching with respect to $\tilde{A}$, setting $\Prob(A_t = a \mid \hist )= \Prob(\tilde{A}=a \mid \hist)$ for each $a\in \Ac$.

Let us describe how to implement probability matching in a manner that mirrors the description of STS for the infinite armed bandit. Let $\Ac_t = \{ A_0, \ldots  A_{t-1} \}$ denote the set of previously sampled actions and $\Ac_t(\theta_0) = \Ac_{t} \cap \Ac(\theta_0)$ denote previously sampled actions whose feature vectors are optimal under parameter $\theta_0\in \mathbb{R}^d$. Let $\theta_{1\Ac_t(\theta_0)} =\{\theta_{1a} : a \in \Ac_{t}(\theta_0)  \}$ denote the corresponding components of $\theta_{1}$. Over each $t$th period, STS selects an action $A_t$ as follows: 
\begin{enumerate}
	\item Sample $\hat{\theta}_0 \sim \Prob(\theta_0 \in \cdot \mid \hist )$
	\item Sample $\hat{\theta}_{1\Ac_t(\hat{\theta}_0)} \sim \Prob\left( \theta_{1\Ac_t(\theta_0)} \in \cdot \mid \theta_0=\hat{\theta}_0, \hist  \right)$  
	\item Let $\hat{\tau} = \min\left\{\tau \in \{1,\ldots,t-1\} :   \hat{\theta}_{1 A_{\tau}} \geq 1-D \right\}$
	\item If $\hat{\tau}$ is not null set $A_t=A_{\hat{\tau}}$. Otherwise, play an untested action $A_t \in \Ac(\hat{\theta}_0)\setminus \Ac_t$.  
\end{enumerate}
Steps 2-4 correspond to satisficing Thompson sampling for the infinite armed bandit as described in Section \ref{sec: satisficingTS}, but applied conditioned on $\theta_0=\hat{\theta}_0$. Using the ideas described in Subsection \ref{subsec: sts for infinite arms}, a simple modification of this applies to problems with a large but finite number of arms and a prior that is not necessarily uniform. We run STS with the satisficing action 
\begin{equation}\label{eq: STS finite hierarchical}
\tilde{A}  = \argmin_{ a' \in \mathcal{A}(\theta_0)} \left\{ a' \mid \theta_{1a'} \geq \max_{a \in \Ac(\theta_0)} \theta_{1a}  -D  \right\}
\end{equation} 
As in Subsection \ref{subsec: sts for infinite arms}, we can sample from this action by applying the steps above with a modified definition of $\hat{\tau}$. 

We run a simple numerical experiment to demonstrate the importance of satisficing and generalization to this problem. To simplify the implementation, we focus on the case where there is a normally distributed prior over $\theta_0$ and each $\theta_{1a}$,  allowing the posterior distribution to be expressed in closed form. The experiment treats a problem with a large but finite number of arms, highlighting that satisficing is just as important in such settings. 

Results are displayed in Figure \ref{fig:hierarchical}. The simulation focuses on performance over the first $T=50$ periods. The dimension of the linear model is $d=2$. There is no observation noise, so the reward at time $t$ is $\langle \phi(A_t), \theta_0 \rangle + \theta_{A_t}$.  There are  $k =400$ of actions each of which has a feature vector $\phi(a) \in \{0,1\}^2$ that is drawn randomly. Parameters are drawn randomly under a prior with $\theta_0 \sim N(0, d^{-1/2} I_d)$ and $\theta_{1,a} \sim N(0, 1)$ independently across arms $a$ and from $\theta_0$. We run STS with the satisficing action in \eqref{eq: STS finite hierarchical}. 

We compare the performance of STS against Thompson sampling (TS) and an incorrect version of STS that does not model the linear component of the model, instead treating the problem as an infinite armed bandit independent arms and normally distributed prior. We choose $D=1$ for both variants of STS. Figure \ref{fig:hierarchical} shows the average reward earned by each algorithm in each of the first 50 periods. Results are averaged across 2000 trials\footnote{Standard errors are smaller than .01 for each algorithm and time period, so confidence intervals are omitted.}. We see that STS earns much higher rewards in early periods, earning an average reward across the 50 time periods of 1.21, compared with .79 for incorrect STS and .67 for standard Thompson sampling. In this case Thompson sampling generalizes across arms but does not satisfice, preventing it from settling on arms with large arm-specific effect. The incorrect variant of STS satisfices, but does not generalize across arms. STS can simultaneously generalize and satisfice, leading to large improvement in performance. 

\begin{figure}
	\centering
	\includegraphics[height=2.1in]{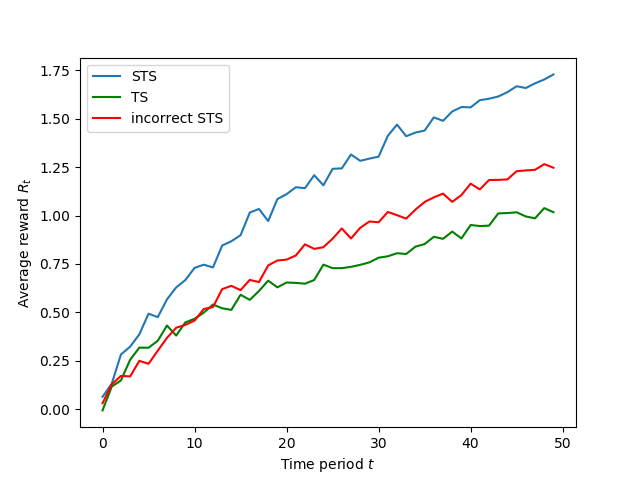}
	\caption{Average reward earned in early periods of the hierarchical bandit problem.}
	\label{fig:hierarchical}
\end{figure}


\section{Information Theoretic Proofs of the Infinite Armed Bandit Results}
\subsection{Proof of the upper bound: Theorem \ref{thm:noisy regret bound}.}
\label{subsec: proof of of infinite armed bandit bound} 
Our proof will use the following fact, which is a consequence of Pinsker's inequality and is stated as Fact 9 in \cite{russo2016info}. \\
\begin{fact}\label{fact: DME to DKL} For any distributions $P$ and $Q$ such that that $P$ is absolutely continuous with respect to $Q$,  any random variable $X: \Omega \rightarrow \mathcal{X}$ and any $g:\mathcal{X}\rightarrow \mathbb{R}$ such that $\sup g  - \inf g \leq 1$, 
	$$\E_{P} \left[ g(X) \right] - \E_{Q} \left[ g(X) \right]  \leq \sqrt{\frac{1}{2} D \left( P || Q \right) },$$
	where $\E_{P}$ and $\E_{Q}$ denote the expectation operators under $P$ and $Q$. 
\end{fact}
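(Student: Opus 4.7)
The plan is to obtain this Pinsker-type bound in two standard steps: first reduce the expectation difference to a total variation distance between $P$ and $Q$, then invoke Pinsker's inequality to pass from total variation to Kullback--Leibler divergence. The absolute continuity hypothesis ensures $D(P\|Q)$ is well defined and that $P-Q$ is a finite signed measure on $\mathcal{X}$.

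First, I would normalize $g$. Since $\E_P[g(X)] - \E_Q[g(X)]$ is invariant under additive shifts of $g$, I can replace $g$ with $\bar{g} := g - \inf g$, which by the oscillation hypothesis takes values in $[0,1]$. For any measurable $f: \mathcal{X} \to [0,1]$, the standard variational formula for total variation gives
\[
\E_P[f(X)] - \E_Q[f(X)] \;\leq\; \|P - Q\|_{\rm TV},
\]
where $\|P-Q\|_{\rm TV} = \sup_{A} (P(A) - Q(A))$. This can be verified by writing the left-hand side as $\int f \, d(P-Q)$, decomposing $P-Q$ into its Jordan positive and negative parts $(P-Q)^+$ and $(P-Q)^-$, and using $0 \le f \le 1$ to bound $\int f \, d(P-Q)^+ \le (P-Q)^+(\mathcal{X}) = \|P-Q\|_{\rm TV}$ while discarding $-\int f\, d(P-Q)^-$ as nonpositive. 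Applied to $\bar g$ this yields $\E_P[g(X)] - \E_Q[g(X)] \le \|P-Q\|_{\rm TV}$.

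Second, I would invoke Pinsker's inequality,
\[
\|P - Q\|_{\rm TV} \;\leq\; \sqrt{\tfrac{1}{2}\, D(P\|Q)},
\]
which is a classical result; one standard route is to reduce to the Bernoulli case by conditioning on the event $\{dP/dQ \ge 1\}$ (a sufficient statistic for total variation), and then check the two-point inequality by a short calculus argument on $p \mapsto 2(p-q)^2 - p\log(p/q) - (1-p)\log((1-p)/(1-q))$. Chaining the two displays gives exactly the claimed bound. I do not expect a substantive obstacle: both ingredients are textbook, and the only point requiring any care is recognizing that the oscillation hypothesis $\sup g - \inf g \le 1$ is precisely what permits the shift normalization to produce a test function valued in $[0,1]$, so that the variational characterization of total variation applies without an extra constant factor.
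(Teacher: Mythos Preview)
Your proposal is correct and matches the paper's treatment: the paper does not give a proof but simply states this fact as a consequence of Pinsker's inequality (citing it as Fact~9 in \cite{russo2016info}), which is exactly the two-step reduction you outline.
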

\vspace{\baselineskip}

We begin by showing the mutual information bound stated as part of Theorem \ref{thm:noisy regret bound}. 
\begin{lemma}[Mutual Information Bound]
	\label{lem: infinite armed bandit mutual info bound}
	Let $\delta=\Prob(\theta_a \geq 1-D)$. Then
	\[
	I(\tilde{A}; \theta) \leq 1 + \log\left( 1/ \delta \right).
	\] 
\end{lemma}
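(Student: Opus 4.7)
The plan is to exploit the fact that $\tilde{A}$ is a deterministic function of $\theta$, which collapses the mutual information to an entropy: since $\tilde{A}=\min\{a : \theta_a \geq R^*-D\}$ is measurable with respect to $\theta$, we have $H(\tilde{A}\mid\theta)=0$ and therefore $I(\tilde{A};\theta) = H(\tilde{A})$. This reduces the lemma to a pure entropy bound on the marginal distribution of $\tilde{A}$.

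Next I would identify this marginal distribution. Because the prior makes the coordinates $(\theta_a)_{a\in \Ac}$ independent with a common probability $\delta = \Prob(\theta_a \geq R^*-D)$ of exceeding the satisficing threshold, the index of the first ``hit'' is geometric: $\Prob(\tilde{A}=k) = (1-\delta)^{k-1}\delta$ for $k\geq 1$. A direct computation then gives
\[
H(\tilde{A}) = -\log\delta - \frac{1-\delta}{\delta}\log(1-\delta) = \log(1/\delta) + \frac{1-\delta}{\delta}\log\!\left(\frac{1}{1-\delta}\right).
\]

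The only remaining step is to bound the second term by $1$. Applying the elementary inequality $\log y \leq y-1$ with $y=1/(1-\delta)$ gives $\log(1/(1-\delta)) \leq \delta/(1-\delta)$, so
\[
\frac{1-\delta}{\delta}\log\!\left(\frac{1}{1-\delta}\right) \leq \frac{1-\delta}{\delta}\cdot \frac{\delta}{1-\delta} = 1,
\]
which yields $I(\tilde{A};\theta) = H(\tilde{A}) \leq 1 + \log(1/\delta)$ as desired. There is no real obstacle here; the only delicate point is recognizing the geometric structure and choosing a clean inequality for $-(1-\delta)\log(1-\delta)$. Alternatively, one could derive $H(\tilde{A}) = H_2(\delta)/\delta$ via a self-similarity argument (conditioning on whether $\tilde{A}=1$ and using the memoryless property of the geometric distribution to get $H(\tilde{A}) = H_2(\delta) + (1-\delta)H(\tilde{A})$), then bound the binary entropy, but the direct computation above is the most economical route.
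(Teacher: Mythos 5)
Your proof is correct and follows essentially the same route as the paper's: reduce $I(\tilde{A};\theta)$ to $H(\tilde{A})$ via determinism, recognize the geometric distribution, and bound the $-\frac{1-\delta}{\delta}\log(1-\delta)$ term by $1$ using $\log y \leq y-1$ (the paper writes this same inequality as $\log(1+x)\leq x$). No gaps.
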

\begin{proof} 
Since $\tilde{A} = \min\{a \in \Ac: \theta_a \geq 1-D\}$ is a deterministic function of $\theta$, we have 
$I(\tilde{A}; \theta)=H(\tilde{A}) = H(N)$ where $N\sim {\rm Geom}(\delta)$ is a geometric random variable. This implies
\begin{eqnarray*}
	I(\tilde{A} ; \theta)
	&=& H\left( N \right) \\
	&=& -\sum_{k=1}^{\infty} \delta (1-\delta)^{k-1} \log(\delta (1-\delta)^{k-1}) \\
	&=& -\sum_{k=1}^{\infty}\delta (1-\delta)^{k-1}\log(\delta) - \sum_{k=1}^{\infty}\delta(1-\delta)^{k-1}\log((1-\delta)^{k-1})\\
	&=& \sum_{k=1}^{\infty}\Prob(N=k)\log(1/\delta)  - \log(1-\delta) \sum_{k=1}^{\infty} \delta (1-\delta)^{k-1}(k-1) \\
	&=& \log(1/\delta) + \log\left( \frac{1}{1-\delta} \right)(\E[N]-1)\\
	&=& \log(1 /\delta) + \log\left(1 + \frac{\delta}{1-\delta}  \right)\left( \frac{1-\delta}{\delta} \right) \\
	&\leq & \log(1/\delta)+ \left( \frac{\delta}{1-\delta} \right)\left( \frac{1-\delta}{\delta} \right)\\
	&=& 1+\log(1/\delta). 
\end{eqnarray*}	 
\end{proof}

Throughout the remainder of the proof we use the notation 
\[ 
\Gamma_{t} = \frac{(\E_{t}[\theta_{\tilde{A}} - \theta_{A_t}])^2}{I_{t}(\tilde{A} ; (A_t ,Y_{t}) )}.
\]
This represents the expected one-step information ratio in period $t$ under the posterior measure $\Prob(\cdot | \hist)$. The next lemma shows that the cumulative information ratio can be bounded by the expected discounted average of these one-step information ratios.  
\begin{lemma}[Relating the information ratio to the one-step-information-ratio]
	\label{lem: one step to total information ratio}
	\[
	\Gamma\left(\tilde{A}, \psi_D^{\rm STS}	\right) \leq (1-\alpha^2)\sum_{t=0}^{\infty} \alpha^{2t} \E[ \Gamma_{t}]
	\]
\end{lemma}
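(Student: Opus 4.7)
My plan is to establish the stated inequality term-by-term in the discounted sum. That is, I will show that for each fixed $t$,
\[
\frac{(\E[\tilde{R} - R_{t}])^2}{I(\tilde{A};(A_t,Y_t)\mid \hist)} \;\leq\; \E[\Gamma_t],
\]
and then multiply by $(1-\alpha^2)\alpha^{2t}$ and sum over $t \in \N$. Once this one-step bound is in hand, the lemma follows immediately from the definition of $\Gamma(\tilde{A},\psi_D^{\rm STS})$ in \eqref{eq: information-ratio}.

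\textbf{Key step.} First I would invoke the tower property to rewrite the numerator: since $\tilde{R} = \theta_{\tilde{A}}$ and $\E_t[R_t] = \E_t[\theta_{A_t}]$ (the idiosyncratic noise $W_t$ is mean-zero given $\hist$ and $A_t$), we have $\E[\tilde{R} - R_{t}] = \E[\E_t[\theta_{\tilde{A}} - \theta_{A_t}]]$. Next I would introduce the square root of the one-step conditional mutual information as a multiplicative factor and apply the Cauchy--Schwarz inequality to the outer expectation:
\[
\bigl(\E[\tilde R - R_t]\bigr)^2 \;=\; \left(\E\!\left[\frac{\bigl|\E_t[\theta_{\tilde A}-\theta_{A_t}]\bigr|}{\sqrt{I_t(\tilde A;(A_t,Y_t))}}\cdot \sqrt{I_t(\tilde A;(A_t,Y_t))}\right]\right)^{\!2} \;\leq\; \E[\Gamma_t]\cdot \E\bigl[I_t(\tilde A;(A_t,Y_t))\bigr].
\]
Finally, the standard identity $\E[I_t(\tilde A;(A_t,Y_t))] = I(\tilde A;(A_t,Y_t)\mid \hist)$ relating conditional mutual information to its realized analogue lets me divide through to conclude the claimed one-step bound.

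\textbf{Finishing up.} Multiplying the one-step inequality by $(1-\alpha^2)\alpha^{2t}$ and summing gives
\[
\Gamma(\tilde A, \psi_D^{\rm STS}) \;=\; (1-\alpha^2)\sum_{t=0}^\infty \alpha^{2t}\frac{(\E[\tilde R - R_t])^2}{I(\tilde A;(A_t,Y_t)\mid \hist)} \;\leq\; (1-\alpha^2)\sum_{t=0}^\infty \alpha^{2t}\E[\Gamma_t],
\]
as desired. The main (and only) subtlety is the Cauchy--Schwarz step: one must be careful that the ratio $(\E_t[\theta_{\tilde A}-\theta_{A_t}])^2 / I_t(\tilde A;(A_t,Y_t))$ is well-defined and that the manipulation handles the event where the denominator vanishes (in which case the numerator vanishes as well, by the data-processing inequality, so by convention $\Gamma_t$ can be taken to be $0$ there without affecting the argument). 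I do not anticipate any further obstacle, since the proof is essentially a repackaging of Jensen/Cauchy--Schwarz combined with the tower property for conditional mutual information.
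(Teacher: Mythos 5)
Your proposal is correct and follows essentially the same route as the paper: rewrite $\E[\tilde R - R_t]$ via the tower property as $\E[\sqrt{\Gamma_t}\sqrt{I_t(\tilde A;(A_t,Y_t))}]$, apply Cauchy--Schwarz, and use $\E[I_t(\tilde A;(A_t,Y_t))]=I(\tilde A;(A_t,Y_t)\mid\hist)$ before summing the discounted series. Your remark about handling a vanishing denominator is a careful touch the paper leaves implicit, but it does not change the argument.
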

\begin{proof} 
We have
\begin{eqnarray*}
	\E[\tilde{R} - R_t] = \E\left[\theta_{\tilde{A}} - \theta_{A_t}\right] &=&   \E\left[  \E_{t}\left[\theta_{\tilde{A}} - \theta_{A_t}\right] \right]  \\
	&=& \E\left[ \sqrt{\Gamma_{t}} \sqrt{I_{t}\left(\tilde{A};  (A_t, Y_{t})\right)} \right]  \\
	&\leq& \sqrt{ \E[\Gamma_t] \, \E\left[ I_{t}\left(\tilde{A};  (A_t, Y_{t})\right) \right]} \\
	&=& \sqrt{ \E[\Gamma_t] \,  I\left(\tilde{A};  (A_t, Y_{t}) \mid \hist \right). }
\end{eqnarray*}
Then, by the definition of the information ratio
\[
\Gamma\left(\tilde{A}, \psi \right) = (1-\alpha^2) \sum_{t=0}^\infty \alpha^{2t} \left(\frac{(\E[\tilde{R} - R_{t}])^2}{I(\tilde{A};  (A_t, Y_{t}) | \hist )}\right)  \leq (1-\alpha^2)\sum_{t=0}^{\infty} \alpha^{2t} \E[ \Gamma_{t}]. 
\]
\end{proof}  
The next lemma provides a bound on the one-step information ratio. 
\begin{lemma}
	\[
	\Gamma_{t} \leq 2 |\Ac_{t}| +2/\delta
	\]
	where $\Ac_{t} = \cup_{s=1}^{t-1}\{ A_s \}$ is the set of actions that were sampled before period $t$, and  $\delta \equiv \Prob(\theta_{i} \geq R^*-D)$ is the prior probability an arm is $D$--optimal.
\end{lemma}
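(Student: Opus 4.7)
The plan is to exploit STS's natural exploit/explore partition. Let $p_t(a) := \Prob_t(\tilde{A} = a)$ for $a \in \mathcal{A}_t$ and $q_t := \Prob_t(\tilde{A} \notin \mathcal{A}_t)$, and let $b$ denote the (deterministic given $\mathcal{A}_t$) fresh action STS plays when $A_t \notin \mathcal{A}_t$. Probability matching gives $\Prob_t(A_t = a) = p_t(a)$ for $a \in \mathcal{A}_t$ and $\Prob_t(A_t = b) = q_t$. Since $A_t$ is conditionally independent of $(\theta, \tilde{A})$ given $\mathcal{H}_{t-1}$, and since the posterior of $\theta_a$ for any untested $a$ conditioned on $\tilde{A} = a$ collapses (by arm-independence) to the prior of $\theta_a$ given $\theta_a \geq R^{*}-D$, the one-step gap decomposes as
\[
\E_t[\theta_{\tilde{A}} - \theta_{A_t}] = \underbrace{\sum_{a \in \mathcal{A}_t} p_t(a)\bigl(\E_t[\theta_a \mid \tilde{A} = a] - \E_t[\theta_a]\bigr)}_{\Delta_t^{\rm old}} + \underbrace{q_t(\nu^\star - \mu_0)}_{\Delta_t^{\rm new}},
\]
where $\mu_0 := \E[\theta_a]$ and $\nu^\star := \E[\theta_a \mid \theta_a \geq R^{*}-D]$. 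I then use $(\Delta_t^{\rm old} + \Delta_t^{\rm new})^2 \leq 2(\Delta_t^{\rm old})^2 + 2(\Delta_t^{\rm new})^2$ and bound each piece against the companion mutual-information decomposition $I_t(\tilde{A}; (A_t, Y_t)) = \sum_{a \in \mathcal{A}_t} p_t(a)\, I_t(\tilde{A}; Y_t \mid A_t = a) + q_t\, I_t(\tilde{A}; Y_t \mid A_t = b)$.

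For $\Delta_t^{\rm old}$ the standard Russo--Van Roy template applies essentially verbatim: Cauchy--Schwarz over the $|\mathcal{A}_t|$ summands, Fact~\ref{fact: DME to DKL} applied to the mean gap $\E_t[Y_t \mid A_t = a, \tilde{A} = a] - \E_t[Y_t \mid A_t = a]$ (legitimate because $A_t \perp \theta \mid \mathcal{H}_{t-1}$ and $Y_t \in [0,1]$), and then the bound $I_t(\tilde{A}; Y_t \mid A_t = a) \geq p_t(a)\, D\bigl(\Prob_t(Y_t \mid A_t = a, \tilde{A} = a) \,\|\, \Prob_t(Y_t \mid A_t = a)\bigr)$ obtained by keeping only the $\tilde{a}=a$ summand inside the conditional mutual information. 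Combining gives $2(\Delta_t^{\rm old})^2 \leq |\mathcal{A}_t| \sum_{a \in \mathcal{A}_t} p_t(a)\, I_t(\tilde{A}; Y_t \mid A_t = a)$.

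For $\Delta_t^{\rm new}$ the same chain yields $(\Delta_t^{\rm new})^2 \leq \frac{q_t^2}{2\, \Prob_t(\tilde{A} = b)}\, I_t(\tilde{A}; Y_t \mid A_t = b)$, so everything reduces to lower-bounding $\Prob_t(\tilde{A} = b)$. This is the step I expect to be the main obstacle. Using the freedom in how $\mathcal{A}$ is enumerated (see the footnote in Section~\ref{subsec: sts for infinite arms}), I would take $b$ to be the smallest index not in $\mathcal{A}_t$; then every arm with index strictly less than $b$ lies in $\mathcal{A}_t$, so on $\{\tilde{A} \notin \mathcal{A}_t\}$ all $a'<b$ must satisfy $\theta_{a'}<R^{*}-D$, and the identity $\{\tilde{A}=b\}=\{\theta_{a'}<R^{*}-D\ \forall a'<b\}\cap\{\theta_b\geq R^{*}-D\}$ factorizes under $\Prob_t$ (because $\theta_b$ is independent of $\mathcal{H}_{t-1}$ and of the other arms) to give $\Prob_t(\tilde{A}=b) \geq q_t\, \delta$. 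This yields $2(\Delta_t^{\rm new})^2 \leq (q_t/\delta)\, I_t(\tilde{A}; Y_t \mid A_t = b)$, and the $1/\delta$ factor in the final bound originates precisely here, capturing the cost per bit of information gathered from a never-tried arm.

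Combining the two bounds and invoking the elementary inequality $c_1 S_1 + c_2 S_2 \leq (c_1 + c_2)(S_1 + S_2)$ for non-negative reals gives
\[
(\E_t[\theta_{\tilde{A}} - \theta_{A_t}])^2 \leq 2(\Delta_t^{\rm old})^2 + 2(\Delta_t^{\rm new})^2 \leq \bigl(|\mathcal{A}_t| + 1/\delta\bigr)\, I_t(\tilde{A}; (A_t, Y_t)),
\]
which is slightly sharper than, and in particular implies, the stated bound $\Gamma_t \leq 2|\mathcal{A}_t| + 2/\delta$.
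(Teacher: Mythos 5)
Your proposal is correct and follows essentially the same route as the paper: the same decomposition of the one-step regret and of $I_t(\tilde{A};(A_t,Y_t))$ into contributions from previously sampled arms and from a fresh arm, the same use of Pinsker (Fact~\ref{fact: DME to DKL}) and Cauchy--Schwarz over $|\Ac_t|$ terms, and the same key observation that a fresh arm equals $\tilde{A}$ with conditional probability $\delta$ given $\tilde{A}\notin\Ac_t$. The only difference is the final bookkeeping: the paper splits into the cases $\Delta_{t,1}\gtrless\Delta_{t,2}$ to get $\max\{2|\Ac_t|,2/\delta\}$, whereas your combination via $(x+y)^2\leq 2x^2+2y^2$ and $c_1S_1+c_2S_2\leq(c_1+c_2)(S_1+S_2)$ yields the marginally sharper $\Gamma_t\leq|\Ac_t|+1/\delta$, which of course implies the stated bound.
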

\begin{proof}
Define
\[
L \equiv \E[\theta_{i} | \theta_{i} \geq R^*-D] -\E[\theta_{i}]
\]
and
\[
\delta \equiv \Prob(\theta_{i} \geq R^*-D).
\]
Here $\delta$ is the probability an unsampled arm is $D$--optimal, and $L$ is the difference between the expected reward of a $D$--optimal arm and that of an arm sampled uniformly at random. In the case where $\theta_i \sim {\rm Unif}(0,1)$, $\delta=D$ and $L= (1-D)/2$.

We can write expected regret as
\begin{eqnarray*}
	\E_{t}[\theta_{\tilde{A}} - \theta_{A_t}] &=& \sum_{a \in \Ac}\Prob_{t}(\tilde{A}=a)\E_{t}[\theta_{a}|\tilde{A}=a] -  \sum_{a \in \Ac}\Prob_{t}(A_t=a)\E_{t}[\theta_{a}] \\
	&=&\sum_{a \in \Ac_{t}}\Prob_{t}(\tilde{A}=a)\left(\E_{t}[\theta_{a}|\tilde{A}=a]-\E_{t}[\theta_{a}]\right)\\
	&&+ \sum_{a \notin \Ac_{t}}\Prob_{t}(\tilde{A}=a)\E_{t}[\theta_a | \tilde{A}=a]-  \sum_{a \notin \Ac_t}\Prob_{t}(A_t=a)\E_{t}[\theta_{a}]\\
	&=& \sum_{a \in \Ac_{t}}\Prob_{t}(\tilde{A}=a)\left(\E_{t}[\theta_{a}|\tilde{A}=a]-\E_{t}[\theta_{a}]\right)
	+ \Prob_{t}(\tilde{A} \notin \Ac_{t})(\E[\theta_a | \theta_a \geq 1-D]- \E[\theta_a]) \\
	&=& \underbrace{\sum_{a \in \Ac_{t}}\Prob_{t}(\tilde{A}=a) \left(\E_{t}[\theta_{a}|\tilde{A}=a] - \E_{t}[\theta_{a}]\right)}_{\Delta_{t,1}} + \underbrace{\Prob_{t}(\tilde{A} \notin \Ac_{t})L}_{\Delta_{t,2}}.
\end{eqnarray*}
This decomposes regret into the sum of two terms: one which captures the regret due to suboptimal selection within the set of previously sampled actions $\Ac_{t}$, and one due to the remaining possibility that none of the sampled actions are $D$--optimal. The proof develops a similar decomposition for mutual information, and then lower bounds both terms.

Let $Y_{t,a}=g(a, \theta, W_t)$ denote the outcome that would have been realized from a choice of action $a$ at time $t$. We can express mutual information as follows:
\begin{eqnarray*}
	I_t( \tilde{A}; (A_t, Y_{t}) ) &=&\sum_{a\in \Ac} \Prob_{t}(A_t=a) I_t( \tilde{A}; Y_{t} | A_t =a) \\
	&=&\sum_{a\in \Ac_t} \Prob_{t}(\tilde{A}=a) I_{t}(\tilde{A} ; Y_{t,a}) + \sum_{a\notin \Ac_t} \Prob_{t}(\tilde{A}=a) I_{t}(\tilde{A} ; Y_{t,a} | A_t =a)
\end{eqnarray*}
Let us focus on the second sum, which captures the information acquired due to sampling previously untested actions $a\notin \Ac_{t}$. Such an action provides information about $\tilde{A}$, since if $\tilde{A}\notin \Ac_t$ and   $\theta_a\geq 1-D$, then $a$ is the first sampled action to be sufficiently close to optimal and $\tilde{A}=a$. Using the shorthand $P_{t}(X) = \Prob_{t}(X \in \cdot)$  to denote the posterior distribution of a random variable $X$, we have that for an untested action $a\notin \Ac_{t}$ 
\begin{eqnarray*}
	I_{t}(\tilde{A} ; Y_{t,a} | A_t =a)& =&  \sum_{\tilde{a} \in \Ac} \Prob_{t}\left( \tilde{A}=\tilde{a}  \mid A_t =a \right) D\left(P_{t}(Y_{t, a}  \mid  \tilde{A} = \tilde{a}) \,||\, P_{t}(Y_{t,a})   \right) \\ 
	&\geq& \Prob_{t}\left( \tilde{A}=a  \mid A_t =a \right) D\left(P_{t}(Y_{t, a}  \mid  \tilde{A} = a) \,||\, P_{t}(Y_{t,a})   \right) \\
	&=& \Prob_{t}\left( \tilde{A}=a  \mid A_t =a \right) D\left(P_{t}(Y_{t, a}  \mid  \theta_{a}\geq 1-D  ) \,||\, P_{t}(Y_{t,a})   \right) \\
	&\geq &  2 \Prob_{t}\left( \tilde{A}=a  \mid A_t =a \right) \left( \E_{t}\left[\theta_{a} \mid \theta_a \geq 1-D \right]- \E_{t}\left[\theta_{a}\right]\right)^2\\
	&=& 2 \Prob_{t}\left( \tilde{A}=a  \mid A_t =a \right) L^2 \\
	&=& 2 \Prob_{t}\left( \tilde{A} \notin \Ac_t \right) \Prob_{t}\left( \tilde{A}=a  \mid A_t =a, \tilde{A}\notin \Ac_t \right)L^2 \\
	&=& 2 \Prob_{t}\left( \tilde{A} \notin \Ac_t \right) \delta L^2.
\end{eqnarray*}
The second inequality uses Fact \ref{fact: DME to DKL}. This implies
\[
\sum_{a\notin \Ac_t} \Prob_{t}(\tilde{A}=a) I_{t}(\tilde{A} ; Y_{t,a} | A_t =a) \geq 2 \Prob_{t}\left( \tilde{A} \notin \Ac_t \right)^2 \delta L^2.
\]

Next, following the proof of Proposition 3 of \citet{russo2016info} shows
\begin{eqnarray*}
	\sum_{a\in \Ac_{t}} \Prob_{t}(\tilde{A}=a)I_{t}(\tilde{A} ; Y_{t,a}) &=& \sum_{a\in \Ac_{t}} \Prob_{t}(\tilde{A}=a) \sum_{\tilde{a} \in \Ac} D\left(P_{t}(Y_{t,a} \mid \tilde{A}=\tilde{a}) \,||\, P_{t}(Y_{t,a}) \right)  \\
	&\geq & \sum_{a\in \Ac_{t}} \Prob_{t}(\tilde{A}=a)^2 D\left(P_{t}(Y_{t,a}\mid \tilde{A}=a) \,||\, P_{t}(Y_{t,a}) \right) \\
	&\geq & 2\sum_{a\in \Ac_{t}} \Prob_{t}(\tilde{A}=a)^2 \left(\E_{t}[\theta_{a} | \tilde{A}=a ] - \E_{t}[\theta_{a}]   \right)^2  \\
	& \geq  & \frac{2}{|\Ac_{t}|} \left( \sum_{a \in \Ac_{t}} \Prob_{t}( \tilde{A} = a)\left(\E_{t}[\theta_{a} | \tilde{A}=a ] - \E_{t}[\theta_{a}]   \right) \right)^2
\end{eqnarray*}
where the second inequality uses Fact \ref{fact: DME to DKL}. Therefore
\[
I_t( \tilde{A}; (A_t, Y_{t}) ) \geq \underbrace{\frac{2}{|\Ac_{t}| } \left( \sum_{a \in \Ac_{t}} \Prob_{t}( \tilde{A} = a)\left(\E_{t}[\theta_{a} | \tilde{A}=a ] - \E_{t}[\theta_{a}]   \right) \right)^2}_{G_{t,1}} +\underbrace{ 2\Prob_{t}(\tilde{A} \notin \Ac_{t})^2 \delta L^2}_{G_{t,2}},
\]
is lower bounded by the sum of two terms: one which captures the information gain due to refining knowledge about previously sampled actions, and one that captures the expected information gathered about previously unexplored actions.

To bound the information ratio we'll separately consider two cases. If $\Delta_{t,1} \geq \Delta_{t,2}$,
then
\[
\frac{(\E_{t}[\theta_{\tilde{A}} - \theta_{A_t}])^2}{I_t( \tilde{A}; (A_t, Y_{t}) )} \leq \frac{(2\Delta_{t,1})^2}{G_{t,1}+G_{t,2}} \leq \frac{4(\Delta_{t,1})^2}{G_{t,1}} = 2 |\Ac_{t}|.
\]
If instead $\Delta_{t,1} < \Delta_{t,2}$, then
\[
\frac{(\E_{t}[\theta_{\tilde{A}} - \theta_{A_t}])^2}{I_t( \tilde{A}; (A_t, Y_{t}) )} \leq \frac{(2\Delta_{t,2})^2}{G_{t,1}+G_{t,2}} \leq \frac{4(\Delta_{t,2})^2}{G_{t,2}} = \frac{2}{\delta}.
\]
This shows
\[
\frac{(\E_{t}[\theta_{\tilde{A}} - \theta_{A_t}])^2}{I_t( \tilde{A}; (A_t, Y_{t}) )} \leq 2 |\Ac_{t}| +2/\delta. 
\]
\end{proof} 
Combining this result with Lemma \ref{lem: one step to total information ratio}
gives the bound
\begin{eqnarray*}
	\Gamma\left(\tilde{A}, \psi^{\rm STS}_{D} \right) &\leq& (1-\alpha^2) \sum_{t=0}^\infty \alpha^{2t} \E\left[\Gamma_{t}\right] \\
	&\leq & 2/\delta + 2(1-\alpha^2) \sum_{t=0}^\infty \alpha^{2t} \E[ |\Ac _{t}|].
\end{eqnarray*}
To use this result, we begin by bounding $\E[|\Ac_{t}|].$
\begin{lemma}\label{lem: bound on number of sampled actions}
	$|\Ac_{0}|=0$ and for each  $T \in \{1,2,\ldots\},$  $\E[|\Ac_{T}|] \leq 2+ \log(T)/\delta$.
\end{lemma}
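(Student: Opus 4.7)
The plan is to use the probability-matching identity defining STS to rewrite $\E[|\Ac_T|]$ as a sum of one-step exploration probabilities, introduce a deterministic threshold $c = \lceil \log(T)/\delta \rceil$, and argue that exploration after $N_t := |\Ac_t|$ first reaches $c$ is negligible in expectation.

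First, I will fix the convention -- justified by the footnote in Subsection~\ref{subsec: sts for infinite arms} -- that STS samples previously-untested arms in the natural order $1,2,3,\ldots$, so that $\Ac_t = \{1,\ldots,N_t\}$ and the event $\{\tilde{A}\notin \Ac_t\}$ coincides with $\{\tilde{A}>N_t\}$. Letting $B_t = \mathbf{1}\{A_t\notin \Ac_t\}$, probability matching gives $\Prob(B_t=1\mid \hist) = \Prob_t(\tilde{A}>N_t)$, and $N_T = \sum_{t=0}^{T-1} B_t$. Since $N_t$ is non-decreasing with unit jumps, the upward excursions beyond $c$ decompose as
\[
(N_T - c)^+ = \sum_{t=0}^{T-1} \mathbf{1}\{B_t = 1,\ N_t\geq c\}.
\]

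The crucial observation is that on $\{N_t\geq c\}$ the inclusion $\{\tilde{A}>N_t\}\subseteq\{\tilde{A}>c\}$ gives $\Prob_t(\tilde{A}>N_t)\leq \Prob_t(\tilde{A}>c)$, so by the tower law
\[
\Prob(B_t = 1,\ N_t\geq c) \leq \E\bigl[\Prob_t(\tilde{A}>c)\bigr] = \Prob(\tilde{A}>c) = (1-\delta)^c.
\]
With the choice $c=\lceil \log(T)/\delta\rceil$ this quantity is bounded by $e^{-\delta c}\leq 1/T$; summing over the $T$ time steps gives $\E[(N_T-c)^+]\leq 1$. The pointwise bound $N_T \leq c + (N_T - c)^+$, together with $c\leq 1+\log(T)/\delta$, then delivers $\E[|\Ac_T|] \leq 2 + \log(T)/\delta$.

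There is no substantial obstacle once the right decomposition is chosen; the argument deliberately sidesteps the complicated joint law of $\{N_t, \Prob_t(\tilde{A}>N_t)\}$ by replacing the random threshold $N_t$ with the deterministic one $c$, after which the martingale/tower identity $\E[\Prob_t(\tilde{A}>c)] = \Prob(\tilde{A}>c) = (1-\delta)^c$ does all the work. The one step that demands care is verifying that $(N_T - c)^+$ can be written as the indicated sum of unit-jump indicators, which follows from monotonicity of $N_t$.
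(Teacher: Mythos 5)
Your proof is correct and rests on the same core estimate as the paper's: once $c=\lceil\log(T)/\delta\rceil$ arms have been sampled, the conditional probability of trying a new arm is at most $\Prob_{t}(\tilde{A}>c)=(1-\delta)^{c}\leq 1/T$, so the expected number of arms sampled beyond the first $c$ over $T$ periods is at most $1$. The only difference is bookkeeping: the paper isolates the excess via the stopping time $\tau_{k}=\min\{t\leq T : |\Ac_{t}|\geq k\}$ and a time shift, whereas you write the excess directly as $(N_T-c)^{+}=\sum_{t<T}\mathbf{1}\{B_t=1,\,N_t\geq c\}$ and apply the tower property termwise -- a slightly cleaner route that also makes explicit the in-order sampling convention both arguments rely on.
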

\begin{proof} 
Let $\tau_{k} = \min\{ t \leq T | |\Ac_{t}| \geq k   \}$ denote the first period before $T$ in which $k$ actions have been sampled. Then
\begin{eqnarray*}
	\E[|\Ac_{T}|]  &=& \E[|\Ac_{\tau_{k}}|]+ \E[|\Ac_{T}|- |\Ac_{\tau_{k}}|]\\
	&\leq & \E[|\Ac_{\tau_{k}}|]+ \E[|\Ac_{\tau_{k}+T}|- |\Ac_{\tau_{k}}|]\\
	&\leq & k + \E \sum_{t=\tau_{k}}^{\tau_{k}+T-1}  \mathbf{1}(A_t \notin \Ac_{t}  ) \\
	&=& k + \E \sum_{s=0}^{T-1} \Prob(A_{\tau_{k}+s} \notin \Ac_{\tau_{k}+s} | H_{\tau_k +s} ) \\
	&=&  k + \E \sum_{s=0}^{T-1} \Prob(\tilde{A} \notin \Ac_{\tau_{k}+s} | H_{\tau_k +s} ) \\
	&= & k + \sum_{s=0}^{T-1} \Prob(\tilde{A} \notin \Ac_{\tau_{k}+s}) \\
	& \leq & k + T \Prob(\tilde{A} \notin \Ac_{\tau_{k}}) \\
	&= & k + T \Prob( {\rm Geom}(\delta) > k)  \\
	&=& k + T( 1- \delta)^{k} \\
	&\leq &  k +Te^{-\delta k}.
\end{eqnarray*}
Choosing $k = \lceil \log(T) / \delta \rceil \leq 1 +\log(T) / \delta,$ implies
\[
\E[ |\Ac_{T}|] \leq  2 + \log(T) / \delta. 
\] 
\end{proof}  

The next technical lemma shows $\sum_{t=1}^{\infty} \gamma^{-t} \log(t)=O((1/\gamma)\log(1/\gamma)).$ The proof is given in Appendix \ref{se: geometric average of logs}.

\begin{lemma}\label{lem: geometric average of logs}
	For any $\gamma \in (0,1)$,
	\[
	\sum_{t=1}^{\infty} \gamma^{-t} \log(t)\leq \frac{1}{1-\gamma} \left[ 1+ \log\left(\frac{1}{1-\gamma}\right)\right].
	\]
\end{lemma}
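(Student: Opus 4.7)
The plan starts from the observation that the stated inequality, taken literally, cannot hold: for any $\gamma \in (0,1)$ we have $\gamma^{-t} = (1/\gamma)^t$, which grows geometrically, while $\log t$ also grows, so $\sum_{t=1}^\infty \gamma^{-t}\log(t) = +\infty$ and no finite right-hand side can bound it. The exponent in the statement is therefore a typographical error. The surrounding usage pins down the intended statement unambiguously: Lemma \ref{lem: geometric average of logs} is invoked on the series $(1-\alpha^2)\sum_{t=0}^\infty \alpha^{2t}\E[|\Ac_t|]$ with $\alpha\in[0,1]$ and $\E[|\Ac_T|]\leq 2+\log(T)/\delta$, which requires a geometrically \emph{decaying} exponent. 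With $\gamma=\alpha^2$, the intended inequality is
\[
\sum_{t=1}^\infty \gamma^{t}\log(t) \;\leq\; \frac{1}{1-\gamma}\left[1+\log\!\left(\frac{1}{1-\gamma}\right)\right], \qquad \gamma\in(0,1),
\]
and this is what I will plan to prove.

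The cleanest route is a one-line application of Jensen's inequality. Introduce a geometric random variable $N$ supported on $\{1,2,\ldots\}$ with $\Prob(N=t) = (1-\gamma)\gamma^{t-1}$, so that $\E[N] = 1/(1-\gamma)$. Rewriting $\gamma^{t} = \tfrac{\gamma}{1-\gamma}\Prob(N=t)$ gives
\[
\sum_{t=1}^\infty \gamma^{t}\log(t) \;=\; \frac{\gamma}{1-\gamma}\,\E[\log N] \;\leq\; \frac{\gamma}{1-\gamma}\,\log \E[N] \;=\; \frac{\gamma}{1-\gamma}\log\!\frac{1}{1-\gamma},
\]
where the inequality is Jensen applied to the concave function $\log$. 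Since $\gamma/(1-\gamma)\leq 1/(1-\gamma)$, this is in fact sharper than the stated bound; the extra $1/(1-\gamma)$ slack in the paper's inequality harmlessly absorbs loose constants.

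As a sanity check, one can also argue directly by splitting the sum at $T\approx 1/(1-\gamma)$: the head $\sum_{t\le T}\gamma^{t}\log(t)$ is bounded by $\log(T)\cdot \sum_{t\ge 0}\gamma^{t}\le \log(1/(1-\gamma))/(1-\gamma)$, while the tail $\sum_{t>T}\gamma^{t}\log(t)$ is $O(1)$ because $\gamma^{t}$ decays geometrically and dominates the logarithm. Either route works. The only genuine obstacle is correctly diagnosing the typo in the statement; once the exponent is read as $\gamma^{t}$, the proof is short and essentially mechanical, and the bound is tight up to the absolute constant.
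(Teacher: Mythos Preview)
Your diagnosis of the typo is correct and matches the paper: its own proof begins with the step $\sum_{t\geq 1}\gamma^{-t}\log t \leq \sum_{t\geq 1} e^{-(1-\gamma)t}\log t$, which only makes sense reading the left side as $\gamma^{t}$ (via $\gamma^{t}=e^{t\log\gamma}\leq e^{-(1-\gamma)t}$). Your Jensen argument is valid and gives the sharper bound
\[
\sum_{t\geq 1}\gamma^{t}\log t \;=\; \frac{\gamma}{1-\gamma}\,\E[\log N] \;\leq\; \frac{\gamma}{1-\gamma}\,\log\frac{1}{1-\gamma},
\]
which is indeed dominated by the paper's right-hand side.

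The paper takes a different, more computational route: it first replaces $\gamma^{t}$ by $e^{-(1-\gamma)t}$, then compares the sum to the integral $\int_{1}^{\infty} e^{-(1-\gamma)x}\log(x+1)\,dx$, performs the substitution $u=(1-\gamma)x$, and bounds $\log\!\big(u/(1-\gamma)+1\big)\leq 1+\log(1/(1-\gamma))+\log u$ before evaluating the remaining exponential integrals numerically. Your approach is both shorter and tighter: it avoids the initial $\gamma^{t}\leq e^{-(1-\gamma)t}$ slack, the integral comparison, and the numerical estimate of $\int_{1}^{\infty}e^{-u}\log u\,du$, and it drops the additive $1$ inside the bracket. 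The paper's approach buys nothing extra here; it simply reflects a direct analytic calculation rather than the probabilistic rewriting you exploit.
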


Finally we can conclude with the proof of Theorem \ref{thm:noisy regret bound}. As shown before, 
\begin{eqnarray*}
	\Gamma\left(\tilde{A}, \psi^{\rm STS}_{D} \right) &\leq& (1-\alpha^2) \sum_{t=0}^\infty \alpha^{2t} \E\left[\Gamma_{t}\right] \\
	&\leq & 2/\delta + 2(1-\alpha^2) \sum_{t=0}^\infty \alpha^{2t} \E[ |\Ac _{t}|].
\end{eqnarray*}
By Lemma \ref{lem: bound on number of sampled actions} and Lemma \ref{lem: geometric average of logs} we find
\begin{eqnarray*}
	(1-\alpha^2) \sum_{t=0}^\infty \alpha^{2t} \E[ |\Ac _{t}|] &\leq & (1-\alpha^2) \sum_{t=1}^\infty \alpha^{2t} \left(2+ \log(t)/\delta\right) \\
	& \leq & 3 + (1/\delta)(1-\alpha^2) \sum_{t=1}^\infty \alpha^{2t} \log(t) \\
	& \leq  & 3 + (1/\delta)\left[ 1+ \log\left(\frac{1}{1-\alpha^2}\right)\right].
\end{eqnarray*}
This implies
\[
\Gamma\left(\tilde{A},\psi^{\rm STS}_{D} \right) \leq 6 + 4/\delta + (2/\delta)\log\left(\frac{1}{1-\alpha^2} \right) = O\left( (1/\delta) \log\left(\frac{1}{1-\alpha^2}\right)  \right)
\]
and concludes the proof of Theorem \ref{thm:noisy regret bound}.

\subsection{Proof of the Lower Bound: Theorem \ref{thm: lower bound}}
The lower bound analysis leverages many of the standard information theoretic techniques for estalbishing minimax lower bounds \cite[See e.g.][]{tsybakov2008introduction}. We first give a reduction to a hypothesis testing problem in which the goal is simply to identify a satisficing action. We show identifying a satisficing action is hard by upper bounding  a certain KL divegence, through the data-processing inequality and the chain rule. These techniques are based on a classical change of measure argument by \cite{lai1985asymptotically} as well as other bandit lower bounds \cite{kaufmann2016complexity, bubeck2012regret}. Our proof resembles most closely a proof in \citet{bubeck2012regret}  that the minimax regret for $k$-armed un-discounted stochastic bandits is lower bounded by $\frac{1}{20}\sqrt{kT}$, where $T$ is the number of time periods. There is some novelty to our lower bound analysis, however. The most significant change is that our problem involves an infinite number of arms and an independent prior, so there are many satisficing arms and we need to argue it is difficult to consistently play any arm from that set. The construction in \cite{bubeck2012regret} involves a problem instance with a dependent prior, under which it is difficult to identify the single arm that differs from the other $k-1$ arms. We also show how to carry out lower bound analyses of discounted problems by analyzing the distribution of $A_{\tau}$, the action chosen at some randomly selected time $\tau \sim {\rm Geom(1-\alpha)}$. 

\begin{proof} 

{\bf Step 1: Expressing discounted sums in terms of random times.}\\
Let $\tau \sim {\rm Geom}(1-\alpha)$ be distributed independently of all other random variables. In several places, we use that, by the explicit form of the PMF $\Prob(\tau = t) = \alpha^t (1-\alpha)$, 
\[
\mathbb{E} \sum_{t=0}^{\infty} \alpha^t f(A_t)  = (1-\alpha)^{-1} \E[f(A_\tau)]   
\]
for an arbitrary function $f: \mathbb{N} \to \mathbb{R}$. The allows us to compactly compress functions of the entire sequence of interactions into expectations with respect to $A_{\tau}$ alone.

{\bf Step 2: Regret lower bound in terms of the probability of satisficing.}\\
In this problem, the optimal expected reward is $R^* = \frac{1}{2}+D$. We put $S_{\theta}=\{ i \in \mathbb{N} \, : \,  \theta _i \geq \frac{1}{2} \}$ to be the set of satisficing actions and ${\rm OPT}_{\theta} =\{ i \in \mathbb{N} \, : \,  \theta _i= R^* \}$ to be the set of optimal actions. These sets are random due to their dependence on $\theta$. We have 

\begin{align*}
\sregret(\alpha, \psi, D) = \E\left[ \sum_{t=0}^{\infty} \alpha^{t}(R^* - R_t - D)\right] &=  \E\left[ \sum_{t=0}^{\infty} \alpha^{t}(\E\left[ R^* - R_t - D  \mid \hist \right] \right] \\
&=\E\left[ \sum_{t=0}^{\infty} \alpha^{t}(\E\left[ R^* - \theta_{A_t} - D  \mid \hist \right] \right] \\
&= \left(1-\alpha\right)^{-1} \E\left[ R^* - \theta_{A_\tau}  - D\right]\\
& = \left(1-\alpha\right)^{-1} \left[ \Delta\Prob\left(\theta_{{A}_{\tau}}=\frac{1}{2}-D\right) - D\Prob\left(\theta_{{A}_{\tau}}=\frac{1}{2}+D\right) \right]\\
&= \left(1-\alpha\right)^{-1} \left[ \Delta \Prob(A_\tau \in S_{\theta})   - D \Prob(\theta_{A_\tau} \in {\rm OPT}_\theta) \right] 
\end{align*}

We now upper bound $\Prob(A_\tau \in {\rm OPT}_\theta)$ in terms of $\epsilon$. Recall the definition $\Ac_t := \{A_0,\ldots, A_{t-1}\}$. Proceeding recursively, we have $\Prob(\theta_{A_1} < R^*) = 1-\epsilon$, since $A_1$ is chosen independently of $\theta$. Next, we have 
\begin{align*}
\Prob(\theta_{A_1} < R^*  \wedge \theta_{A_2} < R^* )&= (1-\epsilon)\Prob( \theta_{A_2} < R^*  \mid  \theta_{A_1} < R^*)\geq (1-\epsilon)^2.
\end{align*}
To understand the final inequality, note that $\Prob( \theta_{A_2} < R^*  \mid  \theta_{A_1} < R^*, A_1, A_2)$ is equal to one if $A_1=A_2$ and is equal to $1-\epsilon$ otherwise. Hence  $\Prob( \theta_{A_2} < R^*  \mid  \theta_{A_1} < R^*, A_1, A_2) \geq 1-\epsilon$ almost surely. Repeating this process inductively gives 
\[ 
\Prob(\theta_{A_1} < R^*  \wedge\cdots \wedge \theta_{A_t} < R^*) \geq (1-\epsilon)^t.
\]
This gives the bound
\begin{align*}
\Prob( A_\tau \notin {\rm OPT}_\theta) &\geq \Prob(\theta_{A_\tau} < R^*  \wedge\cdots \wedge \theta_{A_\tau} < R^*) \\
&= \E\left[ \Prob(\theta_{A_\tau} < R^*  \wedge\cdots \wedge \theta_{A_\tau} < R^* \mid \tau) \right]  \\
&\geq \E\left[ (1-\epsilon)^\tau \right] \\
&=\sum_{t=0}^{\infty} \alpha^{t} (1-\alpha) (1-2\epsilon)^t \\
&=\frac{1-\alpha}{1-\alpha(1-\epsilon)}. 
\end{align*} 
We find 
\[
\Prob( A_\tau \in {\rm OPT}_\theta) \leq 1-\frac{1-\alpha}{1-\alpha(1-\epsilon)} = \frac{\alpha \epsilon}{1-\alpha(1-\epsilon)}  \leq  \frac{ \epsilon}{ 1-\alpha}.
\]
We've reached our desired result, which lower bounds satisficing regret in terms of the probability of playing a satisficing arm at the random time $\tau$:
\begin{equation}\label{eq: lower bound by prob of satisficing}
\sregret(\alpha, \psi, D) \geq \Delta \cdot \left( \frac{ 1-\Prob( A_\tau \in S_{\theta}) }{ 1- \alpha}\right) - D \cdot  \left(\frac{\epsilon }{ (1-\alpha)^2} \right).
\end{equation}

{\bf Step 3: Identifying a satisficing action is hard.}\\
Our goal is to lower bound $\Prob( A_\tau \in S_{\theta})$. To do this, we consider two alternative infinite armed bandit models. Each induces an a probability measure as follows: \\
\begin{enumerate}
	\item The probability measure $\Prob(\cdot )$ corresponds to the infinite armed bandit model as described in the theorem statement. The collection $\theta \equiv (\theta_a)_{a \in \mathbb{N}}$ is drawn randomly according to the prior probabilities in \eqref{eq: lower bound construction}. The random seed $\xi$ is drawn uniformly from $[0,1]$. For each period $t$, the action $A_{t}=\psi(\hist, \xi)$ is prescribed by the policy $\psi$. Then $\Prob(R_{t} = 1 \mid A_{t}, \theta, \hist, \xi) = \theta_{A_t}$.  
	\item We consider an alternative model, which is identical except that rewards are always drawn from a Bernoulli distribution with mean $1/2-\Delta$. Precisely, we let $\mathbb{Q}$ be an alternative probability measure with the following properties: As before, the random seed $\xi$ is drawn uniformly from $[0,1]$, $A_{t} =\psi(\hist, \xi)$ for each period $t$, and $\theta$ is drawn from according to the prior probabilities in \eqref{eq: lower bound construction}. However, rewards are now independent from $\theta$, with $\mathbb{Q}(R_t =1 \mid A_t, \theta, \hist, \xi) = 1/2-\Delta$. \\
\end{enumerate}

The idea of this construction is that $D_{\rm KL}\left({\mathbb Q}(\hist = \cdot) \, || \, {\mathbb P}(\hist = \cdot)  \right)$ will reduce to considering the divergence in reward distributions, since this is the only source of discrepancy between the probability distributions. We continue to let $\tau \sim {\rm Geom}(1-\alpha)$ denote a geometric random variable that is mutually independent from $\theta$ and $(\Hc_{t})_{t\in \mathbb{N}}$ under both $\Prob$ and $\mathbb{Q}$. We take $\E_{\mathbb Q}[\cdot]$ to denote the expectation under the probability measure $\mathbb{Q}$. 

Under $\mathbb{Q}$, the algorithm's observations are independent of $\theta$, so 
\[
\mathbb{Q}(A_{t} \in S_{\theta}) =  \E_{\mathbb Q}\left[ \mathbb{Q}( A_{t} \in S_{\theta} \mid \theta) \right] = \E_{\mathbb Q}[ \delta ]  = \delta.
\]
Applying this gives,
\[
\mathbb{Q}(A_{\tau} \in S_{\theta}) =  \E_{\mathbb Q}\left[ \mathbb{Q}( A_{\tau} \in S_{\theta} \mid \tau) \right] = \delta.
\]
Then, by Pinsker's inequality 
\begin{align}\nonumber
\Prob(A_{\tau} \in S_{\theta})  & \leq \mathbb{Q}(A_\tau \in S_{\theta}) + \sqrt{\frac{1}{2} D_{\rm KL}\left(  \mathbb{Q}(A_{\tau} \in S_{\theta})  \, ||\, \Prob( A_{\tau} \in S_{\theta}) \right)} \\ 
&= \delta + \sqrt{\frac{1}{2} D_{\rm KL}\left(  \mathbb{Q}(A_{\tau} \in S_{\theta})  \, ||\, \Prob( A_{\tau} \in S_{\theta}) \right)}. \label{eq: bound from pinsker} 
\end{align}
We upper bound and expand the KL divergence through repeated use of the data-processing inequality and the chain rule. We have 
\begin{align*}
D_{\rm KL}\left(  \mathbb{Q}(A_\tau  = \cdot ) \, ||\, \Prob( A_{\tau} = \cdot \right) &\leq  D_{\rm KL}\left(  \mathbb{Q}(A_\tau  = \cdot ) \, ||\, \Prob( A_{\tau} = \cdot \right) \\
& \leq  D_{\rm KL}\left(  \mathbb{Q}( (\tau, \theta, \Hc_{\tau}, \xi ) = \cdot ) \, ||\, \Prob( (\tau, \theta, \Hc_{\tau}, \xi) = \cdot \right) \\ 
& =  D_{\rm KL}\left(  \mathbb{Q}( (\tau, \theta, \xi) = \cdot )\, ||\, \Prob( (\tau, \theta, \xi) = \cdot ) \right)+  D_{\rm KL}\left(  \mathbb{Q}(  \Hc_{\tau}  = \cdot  \mid \tau, \theta, \xi ) \, ||\, \Prob(\Hc_{\tau}  = \cdot  \mid \tau, \theta, \xi ) \right) \\
&= D_{\rm KL}\left(  \mathbb{Q}(  \Hc_{\tau}  = \cdot  \mid \tau, \theta, \xi ) \, ||\, \Prob(\Hc_{\tau}  = \cdot  \mid \tau, \theta, \xi) \right) \\
&=  \sum_{t=0}^{\infty} \mathbb{Q}(\tau =t)  D_{\rm KL}\left(  \mathbb{Q}(  \Hc_{\tau}  = \cdot  \mid  \tau = t, \theta, \xi ) \, ||\, \Prob(\Hc_{\tau}  = \cdot  \mid \tau=t, \theta, \xi ) \right) \\
&= \sum_{t=0}^{\infty} \mathbb{Q}(\tau =t)  D_{\rm KL}\left(  \mathbb{Q}(  \Hc_{t}  = \cdot  \mid  \theta, \xi ) \, ||\, \Prob(\Hc_{t}  = \cdot  \mid \theta, \xi ) \right)
\end{align*}
where the final equality uses the independence of $\tau$ and $(\theta, (\Hc_t)_{t\in \mathbb{N}}, \xi)$. Define the binary KL divergence function $d: [0,1]\times [0,1] \to \mathbb{R}$ by $d(p|| q) = p \log(p/q) + (1-p)\log((1-p)/ (1-q))$.  Now, by the chain rule and the relation $\Hc_t=(\Hc_{t-1}, A_t, R_t)$, we have 
\begin{align*}
D_{\rm KL}\left(  \mathbb{Q}(  \Hc_{t}  = \cdot  \mid  \theta, \xi ) \, ||\, \Prob(\Hc_{t}  = \cdot  \mid \theta, \xi ) \right) =& D_{\rm KL}\left(  \mathbb{Q}(  \Hc_{t-1}  = \cdot  \mid  \theta, \xi) \, ||\, \Prob(\Hc_{t-1}  = \cdot  \mid \theta, \xi) \right)\\
&+ D_{\rm KL}\left(  \mathbb{Q}(  A_t  = \cdot  \mid  \theta, \Hc_{t-1}, \xi) \,\, ||\,\, \Prob( A_t  = \cdot  \mid  \theta, \Hc_{t-1}, \xi) \right) \\
&+ D_{\rm KL}\left(  \mathbb{Q}(  R_t  = \cdot  \mid  \theta, \Hc_{t-1}, A_t, \xi) \, \, ||\,\, \Prob( R_t  = \cdot  \mid  \theta, \Hc_{t-1}, A_t, \xi) \right) \\
=& D_{\rm KL}\left(  \mathbb{Q}(  \Hc_{t-1}  = \cdot  \mid  \theta) \,\, ||\,\, \Prob(\Hc_{t-1}  = \cdot  \mid \theta) \right) +\E_{\mathbb{Q}}\left[ d\left( 1/2-D  \,\, || \,\, \theta_{A_t} \right) \right] \\
=& \cdots\\
=& \E_{\mathbb{Q}}  \left[ \sum_{\ell=0}^{t} d\left( 1/2-\Delta  \,\, || \,\, \theta_{A_\ell} \right) \right] \\
\leq & \E_{\mathbb{Q}}  \left[ \sum_{\ell=0}^{t} \left(\mathbf{1}(A_t \in S_{\theta}) d\left( 1/2-\Delta  \,\, || \,\, 1/2\right)+ \mathbf{1}(A_t \in {\rm Opt}_{\theta}  \right) d\left( 1/4  \,\, || \,\, 3/4\right) \right]   \\
=& \sum_{\ell=0}^{t} \mathbb{Q}(A_t \in S_{\theta}) d\left( 1/2-\Delta  \,\, || \,\, 1/2 \right) + \sum_{\ell=0}^{t} \mathbb{Q}(A_t \in {\rm OPt}_{\theta}) d\left( 1/2-\Delta  \,\, || \,\, 3/4 \right).
\end{align*}
Here we used that $D_{\rm KL}\left(  \mathbb{Q}(  A_t  = \cdot  \mid  \theta, \Hc_{t-1}, \xi) \,\, ||\,\, \Prob( A_t  = \cdot  \mid  \theta, \Hc_{t-1}, \xi) \right)=0$ since, conditioned on $(\Hc_{t-1}, \xi)$,  $A_t$ is almost surely equal to $\psi_{t}(\Hc_{t-1}, \xi)$ under $\mathbb{Q}(\cdot)$ or $\mathbb{P}(\cdot)$. The inequality uses that $1/2-\Delta \geq 1/4$ and $\theta_a \leq 3/4$ by hypothesis. Plugging this in above, we find
\begin{align*}
& D_{\rm KL}\left(  \mathbb{Q}(A_\tau  = \cdot ) \, ||\, \Prob( A_{\tau} = \cdot ) \right) \\
\leq&  \sum_{t=0}^{\infty} \mathbb{Q}(\tau =t) \sum_{\ell=0}^{t} \left[ \mathbb{Q}(A_\ell \in S_{\theta}) d\left( 1/2-\Delta  \,\, || \,\, 1/2 \right)+\mathbb{Q}(A_\ell \in {\rm OPt}_{\theta}) d\left( 1/4  \,\, || \,\, 3/4 \right)\right] \\
=& \sum_{t=0}^{\infty} \mathbb{Q}(\tau \geq t) \left[ \mathbb{Q}(A_t \in S_{\theta}) d\left( 1/2 - \Delta  \,\, || \,\, 1/2 \right)+\mathbb{Q}(A_t \in {\rm OPt}_{\theta}) d\left( 1/4 \,\, || \,\, 3/4 \right)\right]  \\
=& \sum_{t=0}^{\infty} \alpha^{t}  \delta d\left( 1/2-\Delta  \,\, || \,\, 1/2 \right) + \sum_{t=0}^{\infty} \alpha^{t}  \epsilon d\left( 1/4  \,\, || \,\, 3/4 \right)  \\
=& \frac{\delta \cdot \Delta \log(\frac{1+\Delta}{1-\Delta} )}{1-\alpha} + \frac{\epsilon \cdot (1/4) \log (\frac{5/4}{3/4})}{1-\alpha}\\
=& \frac{\delta \cdot \Delta \log(1+ \frac{2\Delta}{1-\Delta})}{1-\alpha}+ \frac{\epsilon \cdot (1/4) \log (\frac{5}{3})}{1-\alpha}\\
\leq& \frac{4\delta \cdot \Delta^2  }{1-\alpha}+ \frac{\epsilon/4}{1-\alpha}
\end{align*}
where the last step uses the requirement that $\frac{1}{1-\Delta}< 2$. To conclude, plugging the above into \eqref{eq: bound from pinsker} and using the concavity of the square root, we have shown 
\begin{equation}\label{eq: bound on prob of satisficing}
\Prob(A_{\tau} \in S_{\theta})   \leq \delta + \Delta \sqrt{ \frac{2\delta }{1-\alpha}} + \sqrt{\frac{\epsilon/4}{1-\alpha}}
\end{equation}

{\bf Step 4: Conclusion by plugging in for $D$ and $\epsilon$.} \\
Combining \eqref{eq: lower bound by prob of satisficing} and \eqref{eq: bound on prob of satisficing}, we have  

\begin{align*}
\sregret(\alpha, \psi, D) &\geq \left( \frac{ \Delta \cdot (1-\delta) - \Delta^2 \cdot \sqrt{ \frac{2\delta }{1-\alpha}}}{1-\alpha} \right)  - \sqrt{\epsilon} \cdot \frac{ \Delta/2 }{(1-\alpha)^{3/2}} - \epsilon \cdot  \left(\frac{D }{ (1-\alpha)^2} \right)\\
&\geq \left( \frac{ \Delta/2 - \Delta^2 \cdot \sqrt{ \frac{2\delta }{1-\alpha}}}{1-\alpha} \right)  - \sqrt{\epsilon} \cdot \frac{ 1/8 }{(1-\alpha)^{3/2}} -\epsilon \cdot  \left(\frac{1/4 }{ (1-\alpha)^2} \right) \\
&\geq \left( \frac{ \Delta/2 - \Delta^2 \cdot \sqrt{ \frac{2\delta }{1-\alpha}}}{1-\alpha} \right)  -  \cdot \frac{\sqrt{\epsilon} \cdot (3/8) }{(1-\alpha)^{2}} \\
&:= f(\Delta) - g(\epsilon )
\end{align*}
where in the final step we will require $\epsilon \leq 1$. 

We now focus on the first term, and will eventually pick $\epsilon$ so the remaining term is sufficiently small. Set 
\[
f(\Delta) =  \frac{1}{1-\alpha}  \cdot \left[  \frac{\Delta}{2}  - \Delta^2 \cdot \sqrt{ \frac{2\delta }{1-\alpha}} \right] 
\]
This is a quadratic function with global minimum at 
\[
\Delta^* = \argmin_{\Delta \in \mathbb{R} } f(\Delta)  =   \frac{1}{4\sqrt{2} } \cdot \sqrt{\frac{1-\alpha }{\delta}} 
\]

For 
\begin{equation}\label{eq: lower bound Delta choice}
\Delta_0 \equiv \min\left\{ \frac{1}{4} , \Delta^* \right\} =  \bigg\{\begin{array}{lr}
1/4 & \quad \text{for  }  2\delta \geq 1-\alpha \\
\Delta^* & \quad \text{for  }  2\delta \leq 1-\alpha
\end{array}
\end{equation}
we have 
\[ 
f(\Delta_0) = \left\{\begin{array}{lr}
\frac{1}{16} \cdot \frac{1}{1-\alpha} & \quad \text{for  }  \delta \geq 2(1-\alpha) \\
\frac{1}{16\sqrt{2} } \cdot \sqrt{\frac{1/\delta}{1-\alpha}} & \quad \text{for  }  \delta \leq 2(1-\alpha)  
\end{array}\right\} \geq \frac{1}{16} \cdot \min\left\{  \frac{1}{1-\alpha} \, , \,    \sqrt{\frac{1/2\delta}{1-\alpha}}  \right\}
\]
Now, we pick $\epsilon_0\leq 1$ so that 
\[ 
g(\epsilon_0) \leq f(\Delta)/2. 
\]
We need 
\[
\frac{\sqrt{\epsilon_0} \cdot (3/8) }{(1-\alpha)^{2}} \leq  \frac{1}{16} \cdot \min\left\{  \frac{1}{1-\alpha} \, , \,    \sqrt{\frac{1/2\delta}{1-\alpha}}  \right\}.
\]
This is satisfied with equality for
\begin{equation}\label{eq: lower bound epsilon choice}
\epsilon_0= \frac{1}{36}  \cdot \min\left\{ (1-\alpha)^2  \, , \,      \frac{(1-\alpha)^3}{2\delta}  \right\}
\end{equation}
This shows that for a choice of $\Delta= \Delta_0$ and $\epsilon \leq \epsilon_0$, as in the theorem statement, we have 
\[ 
\sregret(\alpha, \psi, D) \geq \frac{1}{32} \cdot \min\left\{  \frac{1}{1-\alpha} \, , \,    \sqrt{\frac{1/2\delta}{1-\alpha}}  \right\}.
\]
\end{proof}

\section{Closing Remarks}

We have put forth a way of thinking about satisficing in bandit learning.  
The per-period regret of an algorithm that learns a satisficing action $\tilde{A}$ does not converge to zero,
but the time required can often be far less than what it would be to learn an optimal action $A^*$.  Intuitively,
this advantage stems from the fact that the mutual information $I(\theta; \tilde{A})$, which can be thought of
as the number of bits of information about the model required to learn $\tilde{A}$, can be far less than $I(\theta; A^*)$.

Satisficing plays a particularly important role when there is a satisficing action $\tilde{A}$ for which $I(\theta; \tilde{A}) \ll I(\theta; A^*)$
and the agent exhibits time preference, valuing near-term over long-term rewards.  To express this
in terms of a formal objective, we considered expected discounted regret.  We also introduced satisficing Thompson sampling,
and established results pertaining to infinite-armed and linear bandits demonstrating that this variant of Thompson sampling
captures benefits of targeting a satisficing action. 

We believe this paper puts forth a useful conceptual framework and that satisficing Thompson sampling could serve as a useful design principle for time-sensitive learning problems. But, we also feel this paper takes only a very preliminary step toward understanding satisficing in modern bandit learning. Future work might try to analyze the rate distortion function and information ratio for broader classes of problems than addressed in this paper. We suspect models like the hierarchical bandit in Example 2 may be quite useful in practice, and warrant more complete study. Satisficing is even more important in reinforcement learning than in bandit problems. Most ideas in this paper extend gracefully to contextual bandits, but extensions to reinforcement seem difficult and are a fascinating direction for future work. 

\section*{Acknowledgments.}

The second author was generously supported by a research grant from Boeing and a Marketing Research Award from Adobe. A special thanks is owed to David Tse, who played an important role in the early stages of this work. It was David who first emphasized that bounds based on entropy can be vacuous and pointed us to references on rate-distortion theory. We also thank Tor Lattimore for thoughtful comments on an early draft of this work.

\appendix

	\section{Analysis of the Infinite-Armed Deterministic Bandit.}
\begin{thm}\label{thm: regret of TS}
	For all $\alpha \in [0,1]$, under Thompson sampling in the infinite-armed deterministic bandit,
	$$\E\left[\sum_{t=0}^\infty \alpha^t (R^* - \theta_{A_t})\right] =\frac{1}{2(1-\alpha)}.$$	
	Under satisficing Thompson sampling with tolerance $\epsilon = \sqrt{1-\alpha}$ in the infinite-armed deterministic bandit,
	$$\E\left[\sum_{t=0}^\infty \alpha^t (R^* - \theta_{A_t})\right] \leq \frac{1}{\sqrt{1-\alpha}}.$$
\end{thm}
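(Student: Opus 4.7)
The plan is to handle the two parts of the theorem separately, as they require different calculations.

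For Thompson sampling, I would first argue that TS plays a fresh (untested) arm at every time step. Conditional on the history $\hist$, the posterior of $\theta_a$ for any untested arm $a$ remains Uniform$[0,1]$, since the prior is independent across arms. A posterior sample $\hat\theta$ therefore satisfies $\sup_{a\notin\Ac_t}\hat\theta_a = 1$ almost surely, exceeding any tested arm's known value, so $A_t=\argmax_a \hat\theta_a$ must be untested. Using the Thompson sampling property $A_t \perp \theta \mid \hist$ (since $A_t$ depends only on the independent posterior sample $\hat\theta$), I would then compute
\[
\E[\theta_{A_t}\mid \hist]=\sum_{a\notin\Ac_t}\Prob(A_t=a\mid\hist)\,\E[\theta_a\mid\hist]=\tfrac{1}{2},
\]
and summing the resulting geometric series yields regret $\frac{1}{2(1-\alpha)}$.

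For satisficing Thompson sampling with target $\tilde A=\min\{a:\theta_a\geq 1-\epsilon\}$, I would next characterize the trajectory. By symmetry of the iid Uniform prior across arms, one may without loss of generality assume STS samples untested arms in the order $1,2,\ldots$ while no satisficing arm has yet been observed. Under this convention, the first sampled arm $A_T$ with $\theta_{A_T}\geq 1-\epsilon$ is necessarily the smallest-indexed satisficing arm; the posterior of $\tilde A$ then concentrates on $A_T$, and STS thereafter plays $A_T$ forever. This makes $T$ geometrically distributed with $\Prob(T=t)=(1-\epsilon)^t\epsilon$; the rewards $\theta_{A_t}$ for $t<T$ are conditionally iid Uniform$[0,1-\epsilon]$; and $\theta_{A_T}$ is Uniform$[1-\epsilon,1]$ independently of $T$.

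With this in hand, I would decompose the discounted regret into the exploration ($t<T$) and exploitation ($t\geq T$, where $A_t=A_T$) phases:
\[
\E\left[\sum_{t=0}^\infty\alpha^t(1-\theta_{A_t})\right]=\E\left[\sum_{t=0}^{T-1}\alpha^t(1-\theta_{A_t})\right]+\frac{1}{1-\alpha}\,\E\left[\alpha^T(1-\theta_{A_T})\right].
\]
The exploration term, using $\Prob(T>t)=(1-\epsilon)^{t+1}$, conditional mean regret $(1+\epsilon)/2$, and $\alpha^t\leq 1$, is at most $(1-\epsilon^2)/(2\epsilon)$. The exploitation term, using conditional mean regret $\epsilon/2$, independence of $\theta_{A_T}$ from $T$, and $\E[\alpha^T]\leq 1$, is at most $\epsilon/(2(1-\alpha))$. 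Substituting $\epsilon=\sqrt{1-\alpha}$ then yields
\[
\frac{1-\epsilon^2}{2\epsilon}+\frac{\epsilon}{2(1-\alpha)}=\frac{\alpha}{2\sqrt{1-\alpha}}+\frac{1}{2\sqrt{1-\alpha}}=\frac{1+\alpha}{2\sqrt{1-\alpha}}\leq\frac{1}{\sqrt{1-\alpha}}.
\]

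The hard part will be justifying the behavioral characterization of STS --- in particular, verifying that the algorithm really does separate cleanly into a fresh-exploration phase followed by pure exploitation of the first satisficing arm. This rests on the invariance of the analysis under reordering of arms (prior symmetry) together with the fact that once the posterior of $\tilde A$ concentrates on a single sampled arm, probability matching becomes deterministic. Everything downstream is routine evaluation of two geometric sums and a single inequality.
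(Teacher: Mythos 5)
Your proposal is correct and follows essentially the same route as the paper: the TS part is the same fresh-arm-every-period observation with per-period regret $1/2$, and the STS part uses the same decomposition at the first time $\tau$ a satisficing arm is sampled, with conditional mean regrets $(1+\epsilon)/2$ and $\epsilon/2$, a geometric $\tau$, and the choice $\epsilon=\sqrt{1-\alpha}$. The only differences are cosmetic (you bound the exploration sum by $\E[T]$ rather than computing $\E[1-\alpha^\tau]$, yielding the marginally tighter $\frac{1+\alpha}{2\sqrt{1-\alpha}}$, and you spell out the behavioral characterization of STS via the symmetry/reordering argument that the paper leaves implicit in its footnote).
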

\begin{proof}
In every period $t$, TS samples a new action $A_t \notin \{A_1,...,A_{t-1}\}$, which generates expected reward $\E[\theta_{A_t}] = \E[\theta_1] = 1/2$.  The optimal expected reward is 1, and therefore the expected discounted-regret of TS is
\[
\sum_{t=0}^{\infty} \alpha^{t} (1-1/2) = \frac{1}{2(1-\alpha)}.
\]
Now, let us analyze satisficing Thompson sampling. Let $\tau = \min\{t:\theta_{A_t} \geq 1-\epsilon\}$ denote the fist time it sampled a $\epsilon$--optimal action. Then,
\begin{eqnarray}\nonumber
\E\left[\sum_{t=0}^\infty \alpha^t (R^* - \theta_{A_t})\right]
&=& \E\left[\sum_{t=0}^\infty \alpha^t (1 - \theta_{A_t})\right] \\\nonumber 
&=& \E\left[\E\left[\sum_{t=0}^{\tau-1} \alpha^t (1-\theta_{A_t}) + \sum_{t=\tau}^\infty \alpha^t (1-\theta_{A_t}) \Big| \tau \right]\right] \\\nonumber
&=& \E\left[\E\left[\sum_{t=0}^{\tau-1} \alpha^t \E[1-\theta_{a} | \theta_a \leq 1-\epsilon]  + \sum_{t=\tau}^\infty \alpha^t \E[1-\theta_{a} | \theta_a \geq 1-\epsilon] \Big| \tau \right]\right] \\\nonumber
&=& \E\left[\E\left[\sum_{t=0}^{\tau-1} \alpha^t(1/2 + \epsilon/2)+ \sum_{t=\tau}^\infty \alpha^t(\epsilon/2) \Big| \tau \right]\right] \\\nonumber
&=& \E\left[\E\left[\sum_{t=0}^{\tau-1} \alpha^t(1/2) + \sum_{t=0}^\infty \alpha^t(\epsilon/2) \Big| \tau \right]\right] \\\nonumber
&=&\E\left[\frac{(1-\alpha^\tau)}{2(1-\alpha)} + \frac{ \epsilon}{2(1-\alpha)}\right] \\\label{eq: final bound for infinite armed deterministic bandit}
&\leq& \left(\frac{1}{2\epsilon} + \frac{\epsilon}{2(1-\alpha)}\right).
\end{eqnarray}
The inequality above follows from the calculation
$$\E[1 - \alpha^\tau] = 1 - \sum_{t=0}^\infty \epsilon (1-\epsilon)^t \alpha^t = 1 - \frac{\epsilon}{1 - \alpha(1-\epsilon)} = \frac{1-\alpha(1-\epsilon)-\epsilon}{1-\alpha(1-\epsilon)}\leq \frac{1-\alpha}{\epsilon}.$$
The final bound of $\sqrt{1/(1-\alpha)}$ follows by choosing the minimizer $\epsilon^* = \sqrt{1-\alpha}$ of equation \eqref{eq: final bound for infinite armed deterministic bandit}.
\end{proof}

%
%

\section{Proof of Lemma \ref{lem: geometric average of logs}.}
\label{se: geometric average of logs}

\begin{proof}
\begin{eqnarray*}
	\sum_{t=1}^{\infty} \gamma^{-t} \log(t)&\leq& \sum_{t=1}^{\infty} e^{-(1-\gamma)t} \log(t) \\
	&=& \sum_{t=2}^{\infty} e^{-(1-\gamma)t} \log(t) \\
	&\overset{*}{\leq}& \int_{1}^{\infty} e^{-(1-\gamma)x} \log(x+1)dx \\
	&=& \frac{1}{1-\gamma}\int_{1}^{\infty} e^{-u} \log\left(\frac{u}{1-\gamma}+1\right)du \\
	& \leq & \frac{1}{1-\gamma} \left( \left[ 1+ \log\left(\frac{1}{1-\gamma}\right)\right] \int_{1}^{\infty} e^{-u} du + \int_{1}^{\infty} e^{-u}\log(u) du \right) \\
	&=& \frac{1}{1-\gamma} \left( \left[ 1+ \log\left(\frac{1}{1-\gamma}\right)\right](1/e)+ \int_{1}^{\infty} e^{-u}\log(u) du \right) \\
	& \leq & \frac{1}{1-\gamma} \left[ 1+ \log\left(\frac{1}{1-\gamma}\right)\right]
\end{eqnarray*}
where the last step uses a numerical approximation to the indefinite integral
\[
\int_{1}^{\infty} e^{-u}\log(u) du \approx .22
\]
along with the fact that $1/e + .22 \approx .57< 1.$

The inequality (*) uses that for any $t\geq 2$
\[
e^{-(1-\gamma)t} \log(t) \leq \intop_{t-1}^{t} e^{-(1-\gamma)x} \log(x+1)
\]
since $e^{-(1-\gamma)x}$ is decreasing in $x$ and $\log(x)$ is increasing in $x$.
\end{proof}


\pagebreak

\bibliography{references}
\bibliographystyle{plainnat}

\vspace*{\fill}

\end{document}